\def\eqref#1{equation~\ref{#1}}
\def\1{\bm{1}}
\DeclareMathAlphabet{\mathsfit}{\encodingdefault}{\sfdefault}{m}{sl}
\SetMathAlphabet{\mathsfit}{bold}{\encodingdefault}{\sfdefault}{bx}{n}
\newcommand{\Var}{\mathrm{Var}}
\DeclareMathOperator{\Tr}{Tr}
\newcommand{\MMD}{\operatorname{MMD}}
\newcommand{\CORAL}{\operatorname{CORAL}}
\title{Variance Matters: Improving Domain Adaptation via Stratified Sampling}
\author{\name Andrea Napoli \& Paul White 
    \email \{A.Napoli,P.R.White\}@soton.ac.uk \\
      \addr Institute of Sound and Vibration Research\\
  University of Southampton, UK
}
\begin{document}

\maketitle

\begin{abstract}
Domain shift remains a key challenge in deploying machine learning models to the real world. Unsupervised domain adaptation (UDA) aims to address this by minimising domain discrepancy during training, but the discrepancy estimates suffer from high variance in stochastic settings, which can stifle the theoretical benefits of the method. This paper proposes Variance-Reduced Domain Adaptation via Stratified Sampling (VaRDASS), the first specialised stochastic variance reduction technique for UDA. We consider two specific discrepancy measures -- correlation alignment and the maximum mean discrepancy (MMD) -- and derive ad hoc stratification objectives for these terms. We then present expected and worst-case error bounds, and prove that our proposed objective for the MMD is theoretically optimal (i.e., minimises the variance) under certain assumptions. Finally, a practical k-means style optimisation algorithm is introduced and analysed. Experiments on four domain shift datasets demonstrate improved discrepancy estimation accuracy and target domain performance.
\end{abstract}

\section{Introduction}

Machine learning models often underperform when the test data distribution differs from the training distribution, a phenomenon known as domain shift. Improving robustness to domain shift has been a longstanding goal in machine learning, and is crucial to the widespread deployment of AI \citep{Gulrajani2021InGeneralization,Koh2021WILDS:Shifts}.

Unsupervised domain adaptation (UDA) is a common approach, where models learn representations invariant across source and target domains by minimising a ``domain discrepancy" loss. Two widely used objectives are the correlation alignment (CORAL) loss, which matches covariance matrices \citep{Sun2016DeepAdaptation}, and the maximum mean discrepancy (MMD), which aligns kernel mean embeddings \citep{Tzeng2014DeepInvariance,pmlr-v37-long15,Li2018DomainLearning}. Although theoretically well-grounded \citep{Ben-David2006AnalysisAdaptation,Ben-David2010ADomains,Redko2022AGuarantees}, a key limitation is that empirically estimating these discrepancies is extremely noisy, especially in high dimensions and with small minibatches. This can destabilise training, and sometimes even lead to worse target domain performance than with no domain alignment at all \citep{Dubey2021AdaptiveGeneralization,Gao2023Out-of-DistributionAugmentations,Gulrajani2021InGeneralization,Koh2021WILDS:Shifts,Napoli2023UnsupervisedCalls,Napoli2024ImprovingSampling,Wang2019CharacterizingTransfer}.

Stochastic variance reduction offers a natural remedy. However, many classical such methods, including SAGA \citep{Defazio2014SAGA:Objectives}, SVRG \citep{Johnson2013AcceleratingReduction}, and dual coordinate ascent \citep{Shalev-Shwartz2013StochasticMinimization}, are incompatible with MMD and CORAL, since these losses lack finite-sum structure.
Momentum-based approaches \citep{Polyak1964SomeMethods,Polyak1992AccelerationAveraging,Kingma2014Adam:Optimization} avoid this problem, but at the cost of biasing the gradients, since the training examples are not weighted equally \citep{Gower2020Variance-ReducedLearning}. Estimator shrinkage \citep{Ledoit2020TheEstimation,Muandet2016KernelEstimators,Danafar2014TestingDiscrepancy} may also be applied, but again this introduces bias.

A more flexible direction is to alter the sampling distribution, then correct the sampling bias using weighted losses. Importance sampling biases selection towards points with higher gradient impact \citep{Alain2015VarianceSampling,Johnson2018TrainingSampling,Katharopoulos2017BiasedTraining,Katharopoulos2018NotSampling,Kutsuna2025ExploringTraining,Loshchilov2016OnlineNetworks,Zhao2015StochasticMinimization}. Similarly, typicality sampling upweights examples in higher-density regions \citep{Peng2020AcceleratingSampling}. Again, these approaches require access to per-sample gradients, and are thus incompatible with our problem. Diverse sampling methods reduce variance by enforcing dissimilarity within minibatches, for example using antithetic pairs \citep{Liu2018AcceleratingSampling}, repulsive point processes \citep{Zhang2017DeterminantalDiversification,Zhang2019ActiveProcesses,Bardenet2021DeterminantalSGD,Napoli2024ImprovingSampling}, or faster heuristics such as k-means++ \citep{Napoli2024ImprovingSampling}; such methods are also helpful for balancing the data \citet{Zhang2017DeterminantalDiversification,Napoli2024Diversity-BasedAdaptation}.

Finally, stratified sampling partitions the data into strata and samples independently from each. This can be considered a special case of diverse sampling \citep{Zhang2017DeterminantalDiversification}, but is more computationally efficient since the only significant cost is in stratifying the data, rather than drawing the samples themselves. \citet{Zhao2014AcceleratingSampling} cluster the input space directly, which offers an iteration-independent stratification, but this assumes the gradients are Lipschitz continuous with respect to the inputs. Alternatively, \citet{Liu2020AcceleratingStrata} reconstruct the clusters periodically during training, and propose a criterion to determine when to do so. \citet{Fu2017CPSG-MCMC:MCMC} and \citet{Lu2021VarianceModels} extend stratified sampling to Bayesian and time-series learning respectively.

This paper proposes Variance-Reduced Domain Adaptation via Stratified Sampling (VaRDASS), the first specialised variance reduction strategy for UDA. By using stratified data samplers, VaRDASS obtains variance-reduced stochastic losses without having to trade off bias or make any other compromises with the model or learning algorithm. Moreover, we derive specific stratification objectives for two widely-used UDA losses -- MMD and CORAL, present expected and worst-case error bounds, and prove that our proposed objective for the MMD is theoretically optimal (i.e., minimises the variance) under certain assumptions. We then propose a practical k-means style alternating optimisation algorithm to solve this problem in tractable time.

In experiments and simulations, we validate our choices of clustering objectives, whilst showing that VaRDASS both reduces the estimation error of the domain discrepancy for a given minibatch size, and improves target domain accuracy on 4 UDA benchmark datasets.

\section{Method}
\subsection{Preliminaries}
In UDA, we are given labelled examples
\(\mathcal{D}_{s} = \left\{ \left( x_{s,i},y_{s,i} \right) \right\}_{i = 1}^{n_{s}}\)
from a source distribution \(P_{s}\), and unlabelled examples
\(\mathcal{D}_{t} = \left\{ x_{t,j} \right\}_{j = 1}^{n_{t}}\) from a
different target distribution \(P_{t}\). The goal is to produce a model
\(\Theta : \mathcal{X \rightarrow Y}\) such that the target risk
\(\mathbb{E}_{\left( x_{t},y_{t} \right)\sim P_{t}}\left\lbrack L_{\mathrm{task}}\left( \Theta\left( x_{t} \right),y_{t} \right) \right\rbrack\)
for some task loss \(L_{\mathrm{task}}\) is minimised. We assume
\(\Theta\) decomposes into a featuriser
\(\Theta_{F}:\mathcal{X \rightarrow Z}\) and prediction head
\(\Theta_{C}:\mathcal{Z \rightarrow Y}\), and has intermediate feature
representations \(z_{s},z_{t}\). Since the target data are unlabelled,
UDA methods instead minimise \(L_{\mathrm{task}}\) on the source domain, alongside a
domain adaptation term \(L_\mathrm{DA}\) which aligns the source and target feature
distributions:
\begin{equation}  
\min_{\Theta}{\mathbb{E}_{
\left( x_{s},y_{s} \right)\sim P_{s},\ x_{t}\sim P_{t}}\left\lbrack L_{\mathrm{task}}\left( \Theta\left( x_{s} \right),y_{s} \right) + \lambda\ L_\mathrm{DA}\left( z_{s},z_{t} \right) \right\rbrack},
\end{equation}
where \(\lambda \in \mathbb{R}^{+}\) is a trade-off parameter. During training, minibatches
\({\widetilde{\mathcal{D}}}_{s} = \left\{ \left( {\widetilde{x}}_{s,h},{\widetilde{y}}_{s,h} \right) \right\}_{h = 1}^{k} \subset \mathcal{D}_{s}\)
and
\({\widetilde{\mathcal{D}}}_{t} = \left\{ {\widetilde{x}}_{t,h} \right\}_{h = 1}^{k} \subset \mathcal{D}_{t}\)
are randomly sampled at each iteration (we assume for simplicity that
\(| {\widetilde{\mathcal{D}}}_{s} | = | {\widetilde{\mathcal{D}}}_{t} | = k\)),
and used to compute stochastic losses
\({\widehat{L}}_\mathrm{task}\left( \Theta\left( {\widetilde{x}}_{s} \right),{\widetilde{y}}_{s} \right)\)
and
\({\widehat{L}}_\mathrm{DA}\left( {\widetilde{z}}_{s},{\widetilde{z}}_{t} \right)\).

Our aim is to reduce \(\Var\left( {\widehat{L}}_\mathrm{DA} \right)\) by
constructing the minibatches using stratified sampling. In the following
sections, we will formulate and justify stratification objectives to
this end for the MMD and CORAL losses, and propose a practical
clustering algorithm to solve this problem. Since
\({\widetilde{\mathcal{D}}}_{s}\) and \({\widetilde{\mathcal{D}}}_{t}\)
are sampled independently, they can also be stratified independently.
The following sections describe the process for \(\mathcal{D}_{s}\); the same
procedure can be analogously repeated for \(\mathcal{D}_{t}\).

\subsection{Optimal stratification for MMD}

The (squared) MMD loss is
\begin{equation} \label{eq2}
L_{\MMD} = \left\| \mu_{\phi,s} - \mu_{\phi,t} \right\|_{\mathcal{H}}^{2}
\end{equation}
where $\mu_{\phi,s}=\mathbb{ E}\left\lbrack \phi\left( z_{s} \right) \right\rbrack$, $\mu_{\phi,t}=\mathbb{ E}\left\lbrack \phi\left( z_{t} \right) \right\rbrack$, $\mathcal{H}$ is a reproducing kernel Hilbert space, and
\(\phi:\mathcal{Z \rightarrow H}\) is an implicit mapping.
\(\mathcal{H}\) is associated with a unique positive-definite kernel
\(\kappa:\mathcal{Z \times Z}\mathbb{\rightarrow R}\) for which the
reproducing property
\(\kappa(z,z') = \left\langle \phi(z),\phi(z') \right\rangle_{\mathcal{H}}\)
is satisfied \citep{Gretton2012ATest}.

With minibatch size \(k\), the maximum variance reduction is achieved by
partitioning \(\mathcal{D}_{s}\) into \(k\) strata
${\mathcal{S}=\left\{ S_{h} \right\}}_{h = 1}^{k}$ such that $\bigcup_{}^{}S_{h} = \mathcal{D}_{s},\ \bigcap_{}^{}S_{h} = \varnothing$,
and drawing a single instance
\(\left( {\widetilde{x}}_{s,h},{\widetilde{y}}_{s,h} \right)\sim \mathrm{Uniform}\left( S_{h} \right)\)
from each \citep{Giles2015Numerical4}. Thus, a simple estimate of (\ref{eq2}) is
\begin{equation}
    {\widehat{L}}_{\MMD} = \left\| {\widehat{\mu}}_{\phi,s} - {\widehat{\mu}}_{\phi,t} \right\|_{\mathcal{H}}^{2},
\end{equation}
where
\({\widehat{\mu}}_{\phi,s} = \frac{1}{n_{s}}\sum_{h = 1}^{k}{\left| S_{h} \right|\phi\left( {\widetilde{z}}_{s,h} \right)}\)
and
\({\widetilde{z}}_{s,h} = \Theta_{F}\left( {\widetilde{x}}_{s,h} \right)\), and similarly for \({\widehat{\mu}}_{\phi,t}\).

First, we show that to reduce \(\Var\left( {\widehat{L}}_{\MMD} \right)\),
a good surrogate objective is to reduce
\(\Var\left( {\widehat{\mu}}_{\phi,s} \right)=\mathbb{E}\left[\left\|{\widehat{\mu}}_{\phi,s}-\mu_{\phi,s}\right\|_{\mathcal{H}}^{2}\right]\) directly.
Variance expressions for estimates of the squared MMD have been derived in various prior work \citep{Bounliphone2016AModels,Liu2020LearningTests,Sutherland2022UnbiasedEstimators}. However, to keep our analysis straightforward, we use a simpler expression that assumes the empirical mean embeddings are normally distributed.
To proceed, let $\Sigma_{{\widehat{\mu}}_{\phi,s}}$ and $\Sigma_{{\widehat{\mu}}_{\phi,t}}$ denote the covariance matrices of the respective empirical mean embeddings. \\

\newtheorem{lemma}{Lemma}

\begin{lemma}
Assume
\({\widehat{\mu}}_{\phi,s}\sim\mathcal{N}\left( \mu_{\phi,s},\Sigma_{{\widehat{\mu}}_{\phi,s}} \right)\)
and
\({\widehat{\mu}}_{\phi,t}\sim \mathcal{N}\left( \mu_{\phi,t},\Sigma_{{\widehat{\mu}}_{\phi,t}} \right)\)
are independent random vectors, and that the relevant conditions for
normality under the central limit theorem are satisfied. Then,
\({\widehat{\mu}}_{\phi,s} - {\widehat{\mu}}_{\phi,t}\sim \mathcal{N}\left(m,\Sigma_{{\widehat{\mu}}_{\phi,s}} + \Sigma_{{\widehat{\mu}}_{\phi,t}}\right)\),
where \(m = \mu_{\phi,s} - \mu_{\phi,t}\), and \({\widehat{L}}_{\MMD}\) follows a generalised chi-squared
distribution with variance \citep{Seber2007AStatisticians}
\begin{equation}\Var\left( {\widehat{L}}_{\MMD} \right) = 2\Tr\left( \left( \Sigma_{{\widehat{\mu}}_{\phi,s}} + \Sigma_{{\widehat{\mu}}_{\phi,t}} \right)^{2} \right) + 4m^{T}\left( \Sigma_{{\widehat{\mu}}_{\phi,s}} + \Sigma_{{\widehat{\mu}}_{\phi,t}} \right)m.\end{equation}
\end{lemma}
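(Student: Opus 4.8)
The plan is to treat $\widehat{L}_{\MMD} = \| \widehat{\mu}_{\phi,s} - \widehat{\mu}_{\phi,t} \|_{\mathcal{H}}^{2}$ as a quadratic form in the Gaussian difference vector and read off its first two moments. First I would establish the distributional claim: since $\widehat{\mu}_{\phi,s}$ and $\widehat{\mu}_{\phi,t}$ are independent Gaussians, their difference is again Gaussian, because the family is closed under linear combinations; the mean is $m = \mu_{\phi,s} - \mu_{\phi,t}$ by additivity, and the covariance is $\Sigma := \Sigma_{\widehat{\mu}_{\phi,s}} + \Sigma_{\widehat{\mu}_{\phi,t}}$ because the $\Var$ of a difference of independent vectors is the sum of the two covariances.

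Next I would write $d := \widehat{\mu}_{\phi,s} - \widehat{\mu}_{\phi,t} = m + e$ with $e \sim \mathcal{N}(0,\Sigma)$, so that
\[
\widehat{L}_{\MMD} = d^{T} d = \|m\|^{2} + 2 m^{T} e + e^{T} e .
\]
Since $\|m\|^{2}$ is constant, $\Var(\widehat{L}_{\MMD}) = \Var(2 m^{T} e + e^{T} e)$, which I would expand into three pieces: the linear term $\Var(2 m^{T} e) = 4 m^{T} \Sigma m$ (as $m^{T} e$ is scalar Gaussian with variance $m^{T}\Sigma m$); the quadratic term $\Var(e^{T} e)$; and the cross term $2\Cov(2 m^{T} e, e^{T} e)$.

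For the quadratic term I would invoke Isserlis' (Wick's) theorem for the zero-mean Gaussian $e$, giving $\E[(e^{T} e)^{2}] = (\Tr\Sigma)^{2} + 2\Tr(\Sigma^{2})$ and hence $\Var(e^{T} e) = 2\Tr(\Sigma^{2})$. The cross term vanishes because $\Cov(m^{T} e, e^{T} e)$ is a third-order moment of a centred Gaussian, and all odd moments are zero. Combining the pieces yields $\Var(\widehat{L}_{\MMD}) = 2\Tr(\Sigma^{2}) + 4 m^{T}\Sigma m$, which is the stated expression after substituting $\Sigma = \Sigma_{\widehat{\mu}_{\phi,s}} + \Sigma_{\widehat{\mu}_{\phi,t}}$. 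Finally, the generalised chi-squared claim follows by diagonalising $\Sigma = U\Lambda U^{T}$ and setting $w = U^{T} d \sim \mathcal{N}(U^{T}m, \Lambda)$, whose components are independent; then $\widehat{L}_{\MMD} = \|w\|^{2} = \sum_{i} \lambda_{i} w_{i}^{2}/\lambda_{i}$ is a nonnegatively weighted sum of independent noncentral $\chi^{2}_{1}$ variates, i.e.\ a generalised chi-squared law.

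The main obstacle is purely bookkeeping in the moment computations: applying Isserlis' theorem correctly to obtain the $2\Tr(\Sigma^{2})$ factor, and verifying that the cross-covariance is genuinely zero rather than contributing an $m$-dependent correction. Both reduce to standard Gaussian moment identities, so no genuinely hard step arises; alternatively, the mean and variance can be quoted directly from the quadratic-form moment formulas in \citet{Seber2007AStatisticians}.
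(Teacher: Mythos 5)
Your proposal is correct. The paper itself offers no derivation for this lemma: it simply notes that a difference of independent Gaussians is Gaussian and then quotes the standard mean-and-variance formula for a quadratic form $X^{T}AX$ in a normal vector from \citet{Seber2007AStatisticians} (with $A=I$), which immediately gives $\Var(\widehat{L}_{\MMD}) = 2\Tr(\Sigma^{2}) + 4m^{T}\Sigma m$ for $\Sigma = \Sigma_{\widehat{\mu}_{\phi,s}} + \Sigma_{\widehat{\mu}_{\phi,t}}$. You instead derive that formula from scratch: the decomposition $d = m + e$, the identity $\Var(e^{T}e) = 2\Tr(\Sigma^{2})$ via Isserlis' theorem, the vanishing of the odd-order cross term, and the spectral argument for the generalised chi-squared law are all sound, and you correctly flag at the end that the whole computation could alternatively be replaced by the citation the paper actually uses. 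The only implicit point worth noting in both treatments is that interpreting $\|\cdot\|_{\mathcal{H}}^{2}$ as $d^{T}d$ and writing traces of $\Sigma$ presumes $\mathcal{H}$ is finite-dimensional (or $\Sigma$ is trace-class); the paper makes this assumption explicit only later, in Theorem 1. So your route is a more self-contained, elementary version of the same argument, at the cost of a little bookkeeping that the citation would absorb.
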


The only quantity dependent on the stratification of \(\mathcal{D}_{s}\)
is \(\Sigma_{{\widehat{\mu}}_{\phi,s}}\). Therefore, we consider the
variance as a function of \(\Sigma_{{\widehat{\mu}}_{\phi,s}}\) with the
remaining variables constant. \(\Var\left( {\widehat{L}}_{\MMD} \right)\)
and \(\Var\left( {\widehat{\mu}}_{\phi,s} \right)\) will have the same
minimisers if they are related by a monotonic function, which occurs
when \(\Sigma_{{\widehat{\mu}}_{\phi,s}}\) is isotropic. \\

\newtheorem{theorem}{Theorem}

\begin{theorem} \label{theorem1}

Assume \(\mathcal{H}\) has finite dimensionality $d$
and \(\Sigma_{{\widehat{\mu}}_{\phi,s}} = \sigma^{2}I\) for some
positive scalar \(\sigma^{2}\). Then,
\begin{equation}\arg{\min_{\mathcal{S}}{\Var\left( {\widehat{L}}_{\MMD} \right)}} = \arg{\min_{\mathcal{S}}{\Var\left( {\widehat{\mu}}_{\phi,s} \right)}}.\end{equation}
\end{theorem}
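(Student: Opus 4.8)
The plan is to reduce both variances to functions of the single scalar $\sigma^2$ and then invoke the observation, already noted just before the theorem, that two objectives related by a monotonic function share the same minimisers. First I would record the elementary identity $\Var(\widehat{\mu}_{\phi,s}) = \mathbb{E}[\|\widehat{\mu}_{\phi,s} - \mu_{\phi,s}\|_{\mathcal{H}}^{2}] = \Tr(\Sigma_{\widehat{\mu}_{\phi,s}})$, which follows by expanding the squared norm coordinatewise and summing the per-coordinate variances. Under the isotropy hypothesis $\Sigma_{\widehat{\mu}_{\phi,s}} = \sigma^{2}I$ with $\dim\mathcal{H} = d$, this collapses to $\Var(\widehat{\mu}_{\phi,s}) = d\sigma^{2}$, so minimising $\Var(\widehat{\mu}_{\phi,s})$ over stratifications $\mathcal{S}$ is equivalent to minimising $\sigma^{2}$.

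Next I would substitute $\Sigma_{\widehat{\mu}_{\phi,s}} = \sigma^{2}I$ into the variance formula from the Lemma, treating $\Sigma_{\widehat{\mu}_{\phi,t}}$ and $m$ as constants, since the stratification of $\mathcal{D}_{s}$ affects only $\Sigma_{\widehat{\mu}_{\phi,s}}$. Expanding $(\sigma^{2}I + \Sigma_{\widehat{\mu}_{\phi,t}})^{2}$ and applying linearity of the trace together with $\Tr(I) = d$, the two terms of the Lemma combine into a polynomial in $\sigma^{2}$,
\[
\Var(\widehat{L}_{\MMD}) = 2d\,\sigma^{4} + 4\bigl(\Tr(\Sigma_{\widehat{\mu}_{\phi,t}}) + \|m\|_{\mathcal{H}}^{2}\bigr)\sigma^{2} + c,
\]
where $c = 2\Tr(\Sigma_{\widehat{\mu}_{\phi,t}}^{2}) + 4\,m^{T}\Sigma_{\widehat{\mu}_{\phi,t}}m$ is independent of $\mathcal{S}$.

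I would then establish monotonicity. Differentiating with respect to $\sigma^{2}$ yields $4d\,\sigma^{2} + 4(\Tr(\Sigma_{\widehat{\mu}_{\phi,t}}) + \|m\|_{\mathcal{H}}^{2})$, which is nonnegative on the admissible range $\sigma^{2} \geq 0$ because $d > 0$, a covariance matrix has nonnegative trace, and $\|m\|_{\mathcal{H}}^{2} \geq 0$. Hence $\Var(\widehat{L}_{\MMD})$ is a monotonically nondecreasing function of $\sigma^{2}$. Since $\Var(\widehat{\mu}_{\phi,s}) = d\sigma^{2}$ is also monotonically increasing in $\sigma^{2}$, both objectives are monotone transforms of the common scalar $\sigma^{2}$, so a stratification minimises one precisely when it minimises the other, which gives the claimed equality of the two $\arg\min$ sets.

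The main obstacle, and the reason the isotropy assumption is indispensable, is ensuring that the stratification dependence of $\Var(\widehat{L}_{\MMD})$ funnels entirely through the single scalar $\sigma^{2}$. For a general anisotropic $\Sigma_{\widehat{\mu}_{\phi,s}}$, the quartic trace term $2\Tr((\Sigma_{\widehat{\mu}_{\phi,s}} + \Sigma_{\widehat{\mu}_{\phi,t}})^{2})$ couples the eigenvalues of $\Sigma_{\widehat{\mu}_{\phi,s}}$ to those of $\Sigma_{\widehat{\mu}_{\phi,t}}$ in a manner that need not be a monotone function of $\Tr(\Sigma_{\widehat{\mu}_{\phi,s}})$, so the two $\arg\min$ sets could in principle diverge. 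I would therefore flag explicitly where isotropy is invoked and verify that no residual stratification dependence enters through $m$ or $\Sigma_{\widehat{\mu}_{\phi,t}}$, both of which are held fixed in the optimisation over $\mathcal{S}$.
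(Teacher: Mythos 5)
Your proposal is correct and follows essentially the same route as the paper's proof: substitute the isotropic covariance into the Lemma's variance formula, expand to obtain a polynomial in $\sigma^{2}$ with nonnegative coefficients on the varying terms, and conclude via monotonicity that the two objectives share their minimisers. The only cosmetic difference is that you argue monotonicity through the derivative in $\sigma^{2}$, whereas the paper rewrites the expression directly as an increasing function of $\Var\left( {\widehat{\mu}}_{\phi,s} \right)$; the content is identical.
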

\begin{proof}
See Appendix \ref{theorem1_proof}.
\end{proof}

For the anisotropic case, the relation is not monotonic and so the
special case in Theorem \ref{theorem1} no longer holds. However, by characterising
the degree of monotonicity using Spearman's rank correlation \(\rho\),
we can derive expressions for the expected and worst-case errors of the
surrogate solution, in terms of \(\rho\), \(n_{s}\), \(k\), and the
growth rate of \(\Var\left( {\widehat{L}}_{\MMD} \right)\) with respect to
the solution rankings.

To proceed, let
\(f\left( \mathcal{S} \right) = \Var\left( {\widehat{L}}_{\MMD} \right)\)
and
\(g\left( \mathcal{S} \right) = \Var\left( {\widehat{\mu}}_{\phi,s} \right)\)
be the target and surrogate partitioning objectives as functions of the
dataset partitioning. Define the order statistic
\(\mathcal{S}_{(i)} = \arg\left\{ f\left( \mathcal{S} \right)_{(i)} \right\}\ \)for
all possible partitionings, e.g.,
\(\mathcal{S}_{(1)} = \arg{\min_{\mathcal{S}}{f\left( \mathcal{S} \right)}}\), $\mathcal{S}_{(2)}$ is the second-best partitioning and so on. Then define \(p = \mathrm{Stirling}\left( n_{s},k \right)\) the number of such
partionings, given by the Stirling partition number. Given a Spearman
coefficient \(\rho\) between \(f\) and\(\ g\), define also a rank
displacement
\(\delta_{i} = \mathrm{rank}\left( g\left( \mathcal{S} \right)_{(i)} \right) - {{\mathrm{rank}\left( f\left( \mathcal{S} \right)_{(i)} \right)}}=\mathrm{rank}\left( g\left( \mathcal{S} \right)_{(i)} \right) - i\),
such that, from the formula for Spearman correlation,
\(\sum_{}^{}\delta_{i}^{2} = \frac{p\left( p^{2} - 1 \right)}{6}(1 - \rho)\).
Finally, let
\(\mathcal{S}_{(l)} = \arg{\min_{\mathcal{S}}{g\left( \mathcal{S} \right)}}\)
be the minimiser of \(g\left( \mathcal{S} \right)\) which has rank \(l\)
on \(f\left( \mathcal{S} \right)\) -- i.e., the best partitioning on $g$ is the $l$th best partitioning on $f$.\\

\begin{theorem} [Worst-case error bound] \label{theorem2}
Assume the growth rate of the sorted
\(f\left( \mathcal{S} \right)_{(i)}\) is bounded by some constant \(K\),
such that, for any indices \(i,j\),
$f\left( \mathcal{S} \right)_{(i)} - f\left( \mathcal{S} \right)_{(j)} \leq K(i - j)$. Then, the error incurred by minimising \(g\) instead of \(f\) satisfies
\begin{equation}f\left( \mathcal{S} \right)_{(l)} - f\left( \mathcal{S} \right)_{(1)} \leq K\sqrt{\frac{p\left( p^{2} - 1 \right)}{6}(1 - \rho)}.\end{equation}
\end{theorem}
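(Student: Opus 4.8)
The plan is to chain the growth-rate assumption with the Spearman identity through a single coordinate of the rank-displacement vector. First I would read off the two ranks carried by the surrogate minimiser $\mathcal{S}_{(l)}$. By construction it equals $\argmin_{\mathcal{S}} g(\mathcal{S})$, so it has rank $1$ on $g$; by the definition of $l$, it has rank $l$ on $f$. Substituting these into the displacement definition $\delta_i = \mathrm{rank}\left(g(\mathcal{S})_{(i)}\right) - i$ at the index $i=l$ gives $\delta_l = 1 - l$, hence $l - 1 = |\delta_l|$. This is the key observation: it converts the combinatorial quantity we actually care about (how far, in $f$-rank, the surrogate solution sits from the true optimum) into a single coordinate of the displacement vector whose squared Euclidean norm is pinned down by $\rho$.

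Next I would invoke the growth bound with $i = l$ and $j = 1$. Since the sorted sequence $f(\mathcal{S})_{(i)}$ is nondecreasing and $l \geq 1$, the assumption yields $f(\mathcal{S})_{(l)} - f(\mathcal{S})_{(1)} \leq K(l - 1) = K|\delta_l|$. It then remains only to control $|\delta_l|$, and here the worst-case nature of the bound is captured by the crudest available estimate: a single nonnegative term never exceeds the full sum, so $\delta_l^2 \leq \sum_i \delta_i^2$, i.e.\ $|\delta_l| \leq \sqrt{\sum_i \delta_i^2}$. Plugging in the Spearman identity $\sum_i \delta_i^2 = \frac{p(p^2 - 1)}{6}(1 - \rho)$ and combining with the previous inequality gives $f(\mathcal{S})_{(l)} - f(\mathcal{S})_{(1)} \leq K\sqrt{\frac{p(p^2 - 1)}{6}(1 - \rho)}$, which is the claim.

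I do not anticipate a genuine technical obstacle; the argument is a short composition of a one-line monotonicity step and the trivial term-versus-sum inequality. The only subtlety, and the step I would state most carefully, is the first one: correctly identifying that the surrogate minimiser carries $g$-rank $1$ and $f$-rank $l$, so that the gap $l-1$ equals exactly $|\delta_l|$. This also makes transparent why the bound is worst-case tight, namely when all of the displacement ``mass'' permitted by $\rho$ concentrates into the coordinate $l$ — precisely the adversarial configuration the theorem is designed to bound.
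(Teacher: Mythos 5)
Your proof is correct and follows essentially the same route as the paper's: bound a single coordinate of the rank-displacement vector by the full Spearman sum $\sum_i \delta_i^2$, then apply the growth-rate assumption to convert $l-1$ into $K(l-1)$. If anything your indexing is the more careful one --- the paper asserts $\delta_1 = l-1$, i.e.\ that the $f$-minimiser has $g$-rank $l$, which does not follow from the definition of $l$, whereas your $\delta_l = 1-l$ (the $g$-minimiser has $g$-rank $1$ and $f$-rank $l$) is exactly what the definitions give; since only the magnitude $|{\delta}| = l-1$ enters, the final bound is the same.
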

\begin{proof}
As \(\delta_{1} = l - 1\) and \(\delta_{1}^{2}\leq\sum_{}^{}\delta_{i}^{2}\), \(l\) satisfies
$(l - 1)^{2} \leq \sum_{}^{}\delta_{i}^{2} = \frac{p\left( p^{2} - 1 \right)}{6}(1 - \rho).$
\end{proof}

\begin{theorem} [Expected error] \label{theorem3}
If the rank displacements are assumed to be homogeneous, then the error
is
\begin{equation}f\left( \mathcal{S} \right)_{(l)} - f\left( \mathcal{S} \right)_{(1)} = K\sqrt{\frac{p^{2} - 1}{6}(1 - \rho)}.\end{equation}
\end{theorem}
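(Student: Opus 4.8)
The plan is to mirror the worst-case argument of Theorem \ref{theorem2}, but to promote its single inequality to an equality by exploiting the homogeneity hypothesis. The two ingredients are already in hand: the Spearman identity $\sum_i \delta_i^2 = \frac{p(p^2-1)}{6}(1-\rho)$ recorded in the setup, and the fact that the $g$-minimiser $\mathcal{S}_{(l)}$ sits at rank $l$ on $f$, so that its rank displacement satisfies $\delta_1 = l-1$ exactly as in the proof of Theorem \ref{theorem2}.

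First I would pin down what \emph{homogeneous} rank displacements should mean, namely that every $\delta_i$ carries the same squared magnitude, $\delta_i^2 = \bar\delta^2$ for all $i$. Substituting this into the Spearman identity gives $p\,\bar\delta^2 = \frac{p(p^2-1)}{6}(1-\rho)$, hence $\bar\delta^2 = \frac{p^2-1}{6}(1-\rho)$ and $\bar\delta = \sqrt{\frac{p^2-1}{6}(1-\rho)}$. The point is that the displacement of the $g$-minimiser is then no longer merely dominated by the root-sum-of-squares (the slack step in Theorem \ref{theorem2}); it equals the common value, so $l - 1 = \bar\delta = \sqrt{\frac{p^2-1}{6}(1-\rho)}$.

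Second, I would invoke homogeneity once more to turn the growth-rate bound $f(\mathcal{S})_{(i)} - f(\mathcal{S})_{(j)} \le K(i-j)$ into an equality: if the sorted objective values are uniformly spaced — the natural companion of equal-magnitude displacements — then $f(\mathcal{S})_{(l)} - f(\mathcal{S})_{(1)} = K(l-1)$ holds exactly rather than as an upper bound. Chaining the two displays yields $f(\mathcal{S})_{(l)} - f(\mathcal{S})_{(1)} = K(l-1) = K\sqrt{\frac{p^2-1}{6}(1-\rho)}$, which is the asserted expected error.

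The main obstacle here is conceptual rather than computational: the homogeneity assumption must be stated so that it simultaneously forces all $\delta_i^2$ to equal their average (replacing the $\delta_1^2 \le \sum_i \delta_i^2$ inequality by equality) and makes the growth constant $K$ exact (replacing the growth-rate inequality by equality), since both relaxations in Theorem \ref{theorem2} must tighten at once. Once ``homogeneous'' is read as uniformly spaced $f$-values together with equal-magnitude displacements, the algebra collapses to the two substitutions above; the only delicacy is arguing that this is the correct notion of an average-case regime, as opposed to the worst case captured by Theorem \ref{theorem2}.
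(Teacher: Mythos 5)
Your proposal is correct and takes essentially the same route as the paper: homogeneity forces $\delta_1^2$ to equal the average $\sum_i \delta_i^2 / p$, which combined with the Spearman identity $\sum_i \delta_i^2 = \frac{p(p^2-1)}{6}(1-\rho)$ yields $(l-1)^2 = \frac{p^2-1}{6}(1-\rho)$, and the factor $K$ then converts the rank displacement into the stated error. You are in fact slightly more careful than the paper in flagging that the growth-rate bound $f(\mathcal{S})_{(i)} - f(\mathcal{S})_{(j)} \leq K(i-j)$ must also be promoted to an equality for the theorem's ``$=$'' to hold --- a step the paper's one-line proof leaves entirely implicit.
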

\begin{proof}
With homogeneous displacements, \(\delta_{1}^{2} = \sum_{}^{}\delta_{i}^{2}/p\),
and so
$(l - 1)^{2} = \frac{p^{2} - 1}{6}(1 - \rho).$
\end{proof}

Figure \ref{fig:1} shows the relationship between
\(\Var\left( {\widehat{L}}_{\MMD} \right)\) and
\(\Var\left( {\widehat{\mu}}_{\phi,s} \right)\) for randomly generated
covariances \(\Sigma_{{\widehat{\mu}}_{\phi,s}}\), with \(m = 1\),
\(\Sigma_{{\widehat{\mu}}_{\phi,t}} = 0\), and 4 different values of
\(d\). Note that for $d=1$, Theorem \ref{theorem1} holds and so $\rho=1$ exactly, meaning the errors defined in Theorems \ref{theorem2} and \ref{theorem3} are nil. In the remaining cases, the correlations are all $>0.99$, validating our choice of surrogate clustering objective.

\begin{figure}[t]
    \centering
    \includegraphics[width=0.4\linewidth]{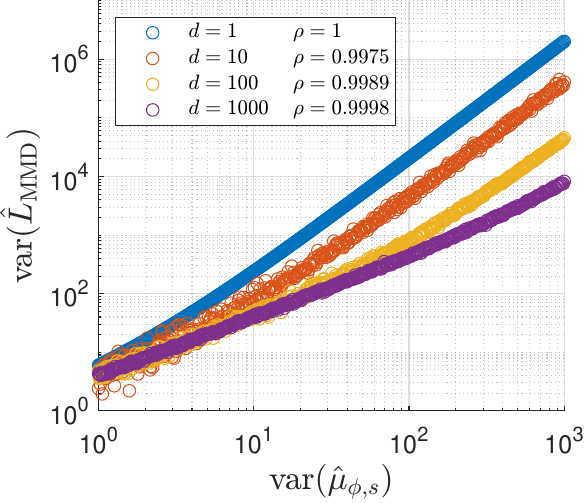}
    \caption{The relationship between the target (\(\Var\left( {\widehat{L}}_{\MMD} \right)\)) and surrogate (\(\Var\left( {\widehat{\mu}}_{\phi,s} \right)\)) partitioning objectives, for a range of feature dimensionalities $d$.}
    \label{fig:1}
\end{figure}

To solve
\(\arg{\min_{\mathcal{S}}{\Var\left( {\widehat{\mu}}_{\phi,s} \right)}}\),
observe that
%
%
\begin{equation}
    \Var\left( {\widehat{\mu}}_{\phi,s} \right) = \Var\left( \frac{1}{n_{s}}\sum_{h = 1}^{k}{\left| S_{h} \right|\phi\left( {\widetilde{z}}_{s,h} \right)} \right) \
    = \frac{1}{n_{s}^{2}}\sum_{i = 1}^{k}{\left| S_{h} \right|^{2}\Var\left( \phi\left( {\widetilde{z}}_{s,h} \right) \right)}
    = \frac{1}{n_{s}^{2}}\sum_{h = 1}^{k}{\left| S_{h} \right|\sum_{z \in S_{h}}^{}\left\| \phi(z) - \mu_{\phi,h} \right\|_{\mathcal{H}}^{2}},
\end{equation}
where
\(\mu_{\phi,h} = \frac{1}{\left| S_{h} \right|}\sum_{z \in S_{h}}^{}{\phi(z)}\)
is the stratum mean. Thus, we have the optimisation
\begin{equation} \label{clusteringprob}
    \arg{\min_{\mathcal{S}}{\sum_{h = 1}^{k}{\left| S_{h} \right|\sum_{z \in S_{h}}^{}\left\| \phi(z) - \mu_{\phi,h} \right\|_{\mathcal{H}}^{2}}}},
\end{equation}
which can be seen as a kernel k-means-style clustering problem with
dynamically weighted clusters. Compared to ordinary (kernel) k-means
clustering, this objective imposes a greater penalty on larger clusters,
and thus encourages (but does not strictly enforce) the clusters to be
of balanced sizes.

\subsection{Optimal stratification for CORAL}

The CORAL loss is defined as
\begin{equation}L_{\CORAL} = \left\| R_{s} - R_{t} \right\|_{F}^{2},\end{equation}
where \(R_{s}\) and \(R_{t}\) are the covariance matrices of \(z_{s}\)
and \(z_{t}\) respectively. An estimate using stratified sampling is
\begin{equation}
{\widehat{L}}_{\CORAL} = \left\| {\widehat{R}}_{s} - {\widehat{R}}_{t} \right\|_{F}^{2},
\end{equation}
where ${\widehat{R}}_{s} = \frac{1}{n_{s} - 1}\sum_{h = 1}^{k}{\left| S_{h} \right|\left( {\widetilde{z}}_{s,h} - {\widehat{\mu}}_{s} \right)\left( {\widetilde{z}}_{s,h} - {\widehat{\mu}}_{s} \right)^{T}}$, $
{\widehat{\mu}}_{s} = \frac{1}{n_{s}}\sum_{h = 1}^{k}{\left| S_{h} \right|{\widetilde{z}}_{s,h}}$, and similarly for \({\widehat{R}}_{t}\) and ${\widehat{\mu}}_t$.

As previously, \({\widehat{R}}_{s}\) and \({\widehat{R}}_{t}\) can be
assumed to be Gaussian and so Theorem \ref{theorem1} implies that to minimise
\(\Var\left( {\widehat{L}}_{\CORAL} \right)\), a good surrogate is to
minimise \(\Var\left( {\widehat{R}}_{s} \right)\) and
\(\Var\left( {\widehat{R}}_{t} \right)\) directly. However, unlike previously, \(\Var\left( {\widehat{R}}_{s} \right)\) is
not a convenient clustering objective, since $\widehat{R}_{s}$ cannot be expressed as a kernel mean embedding. Therefore, we propose instead to
minimise the variance of
\begin{equation}
{\widehat{R}}_{s}' = \frac{1}{n_{s}}\sum_{h = 1}^{k}{\left| S_{h} \right|\left( {\widetilde{z}}_{s,h} - \mu_{s} \right)\left( {\widetilde{z}}_{s,h} - \mu_{s} \right)^{T}},
\end{equation}
which is the sample covariance of \(z_{s}\) in the hypothetical case where
\(\mu_{s}\) is known.

Again, we can assess the validity of
this surrogate objective by estimating its rank
correlation with the \(\Var\left( {\widehat{R}}_{s} \right)\). Given population parameters
\(R_{s}\) and \(\mu_{s}\),
\(\Var\left( {\widehat{R}}_{s} \right)\) and
\(\Var\left( {\widehat{R}}_{s}' \right)\) can be estimated for arbitrary distributions and dimensionality using Monte Carlo simulations. For the special case where
\({\widetilde{z}}_{s,h},h = 1,\ldots, k\) are Gaussian scalars, an exact relation can also be derived (Theorem \ref{theorem4}).\\

\begin{theorem} \label{theorem4}
Assume
\({\widetilde{z}}_{s,h}\sim\mathcal{ N}\left( \mu_{h},\Sigma_{h} \right),h = 1,\ldots, k\),
are independent random scalars, and let \({\widehat{R}}_{s}\) and
\({\widehat{R}}_{s}'\) be sample covariance matrices of \(z_{s}\) as
defined. Then,
\begin{align}
\Var\left( {\widehat{R}}_{s} \right) &= \gamma^2\left(
\Tr\left( (AC\Sigma)^{2} \right) + 4\mu^{T}AC\Sigma AC\mu\right) \\
\Var\left( {\widehat{R}}_{s}' \right) &= 2\Tr\left( (A\Sigma)^{2} \right) + 4\mu^{T}AC\Sigma AC\mu \\
\Var\left( {\widehat{R}}_{s} \right) &= \gamma^2\left(\Var\left( {\widehat{R}}_{s}' \right) + 2\Tr\left( A^{2}\Sigma \right)^{2} - 4\Tr\left( A^{3}\Sigma^{2} \right)\right)
\end{align}
where \(\mu = \left( \mu_{1},\ldots,\mu_{k} \right)^{T}\),
\(\Sigma = \mathrm{diag}\left( \left( \Sigma_{1},\ldots,\Sigma_{k} \right) \right)\),
\(A = \frac{1}{n_{s}}\mathrm{diag}\left( \left( \left| S_{1} \right|,\ldots,\left| S_{k} \right| \right) \right)\), $\gamma=\frac{n_s}{n_s-1}$ is a bias correction factor,
and \(C = I - JA\) is the weighted centering matrix, with \(J\) the
square matrix of ones.
\end{theorem}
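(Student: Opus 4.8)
The plan is to reduce both scalar statistics to quadratic forms in a single Gaussian vector and then apply the standard formula $\Var(\mathbf{x}^T B\mathbf{x}) = 2\Tr((B\Omega)^2) + 4\nu^T B\Omega B\nu$, valid for $\mathbf{x}\sim\mathcal{N}(\nu,\Omega)$ and symmetric $B$ (the Seber reference). First I would stack the $k$ stratum draws into $\mathbf{z} = (\tilde z_{s,1},\ldots,\tilde z_{s,k})^T$, which by independence is $\mathcal{N}(\mu,\Sigma)$ with $\Sigma$ the stated diagonal matrix. Writing the weight vector as $\mathbf{a} = A\mathbf{1}$, the estimated mean is $\widehat\mu_s = \mathbf{1}^T A\mathbf{z}$ and the population mean is $\mu_s = \mathbf{1}^T A\mu$, so the vector of deviations $(\tilde z_{s,h}-\widehat\mu_s)_h$ is exactly $C\mathbf{z}$ with $C = I - JA$. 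Since the draws are scalars, $\widehat{R}_s$ and $\widehat{R}_s'$ are themselves scalars, and folding the $|S_h|$ weights into $A$ gives the clean forms $\widehat{R}_s = \gamma\,\mathbf{z}^T C^T A C\,\mathbf{z}$ and $\widehat{R}_s' = (\mathbf{z} - \mu_s\mathbf{1})^T A(\mathbf{z} - \mu_s\mathbf{1})$, where $\mathbf{z} - \mu_s\mathbf{1}\sim\mathcal{N}(C\mu,\Sigma)$ because $C\mu = \mu - \mu_s\mathbf{1}$.

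The algebraic heart of the argument is a short list of identities for the weighted centering matrix $C$, which I would establish before touching any variance. Because the weights sum to one, $\mathbf{1}^T A\mathbf{1} = 1$, whence $JAJ = J$ and hence $C$ is idempotent, $C^2 = C$. Using $A^T = A$ and $J^T = J$ one also gets $C^T A = AC = A - \mathbf{a}\mathbf{a}^T$, which is \emph{symmetric}, and combining this with idempotency yields the crucial collapse $C^T A C = AC$. These identities are what turn the unwieldy $C^T A C$ and $C^T A\Sigma A C$ into the clean $AC$ and $AC\Sigma AC$ appearing in the statement. I expect assembling this correctly to be the main obstacle — in particular, correctly encoding that $\widehat\mu_s$ is estimated from the \emph{same} sample, which is precisely the role of the projection $C$; once the quadratic form $\gamma\,\mathbf{z}^T C^T A C\,\mathbf{z}$ is written down, the rest is bookkeeping.

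With the identities in hand the first two formulas follow directly: for $\widehat{R}_s$ take $B = C^T A C = AC$ and $\nu = \mu$, and for $\widehat{R}_s'$ take $B = A$ and $\nu = C\mu$, noting that $C^T A\Sigma A C = AC\Sigma AC$ so that the two mean terms coincide at $4\mu^T AC\Sigma AC\mu$. Finally, to obtain the relational identity I would subtract the two expressions so the mean terms cancel and only the difference of the trace terms survives. Expanding $AC = A - \mathbf{a}\mathbf{a}^T$ and using the cyclic property of the trace reduces this difference to the two scalars $\mathbf{a}^T\Sigma\mathbf{a}$ and $\mathbf{a}^T\Sigma A\Sigma\mathbf{a}$; since $A$ and $\Sigma$ are diagonal and $\mathbf{a} = A\mathbf{1}$, these equal $\Tr(A^2\Sigma)$ and $\Tr(A^3\Sigma^2)$ respectively, giving $\Var(\widehat{R}_s) = \gamma^2\big(\Var(\widehat{R}_s') + 2\Tr(A^2\Sigma)^2 - 4\Tr(A^3\Sigma^2)\big)$ and completing the proof.
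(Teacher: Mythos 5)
Your proposal is correct and follows essentially the same route as the paper's proof: stack the draws into a Gaussian vector, write $\widehat{R}_s = \gamma Z^T ACZ$ and $\widehat{R}_s'=(Z-\mu_s\mathbf{1})^T A(Z-\mu_s\mathbf{1})$, apply the Seber quadratic-form variance formula with $C\mu=\mu-\mu_s\mathbf{1}$, and obtain the relational identity by expanding $\Tr\left((AC\Sigma)^2\right)$ via $AC=A-\mathbf{a}\mathbf{a}^T$. The only difference is that you make explicit the centering-matrix identities ($C^2=C$, $C^TA=AC$ symmetric, $C^TAC=AC$) that the paper uses implicitly, and your derivation yields $2\Tr\left((AC\Sigma)^2\right)$, consistent with the paper's appendix computation (the missing factor of $2$ in the theorem's first displayed equation appears to be a typo in the statement).
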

\begin{proof}
    See Appendix \ref{theorem4_proof}.
\end{proof}

\begin{figure}[t]
    \centering
    \includegraphics[width=0.5\linewidth]{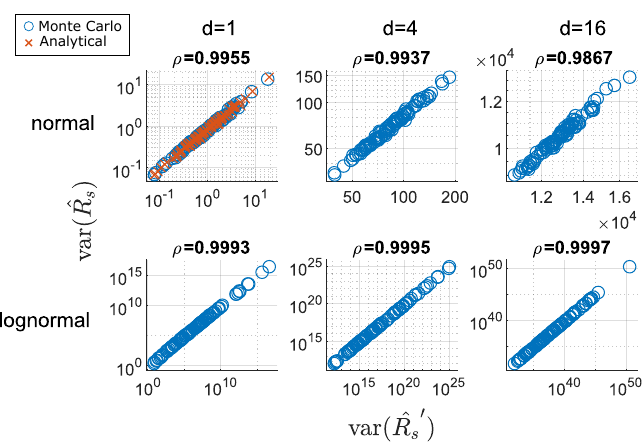}
    \caption{The relationship between the target (\(\Var\left( {\widehat{R}}_{s} \right)\)) and surrogate (\(\Var\left( {\widehat{R}}_{s}' \right)\)) partitioning objectives, for a range of feature dimensionalities $d$, and two distributions.}
    \label{fig:2}
\end{figure}

Figure \ref{fig:2} shows the relationship between
\(\Var\left( {\widehat{R}}_{s} \right)\) against
\(\Var\left( {\widehat{R}}_{s}' \right)\) for a range of dimensionalities
of \(\mathcal{Z}\) and 2 different distributions, the normal
distribution (above) and the lognormal distribution (below), to show an
asymmetric non-Gaussian case. We set \(k = 8\) and the cluster size ratio
\(1,\ldots,k\). The rank correlation is near-monotonic in all cases,
suggesting that \(\Var\left( {\widehat{R}}_{s}' \right)\) is indeed a
valid surrogate objective.

Note that
\begin{equation}
\begin{aligned}
    \Var\left( {\widehat{R}}_{s}' \right) &= \Var\left( \frac{1}{n_{s}}\sum_{h = 1}^{k}{\left| S_{h} \right|\left( {\widetilde{z}}_{s,h} - \mu_{s} \right)\left( {\widetilde{z}}_{s,h} - \mu_{s} \right)^{T}} \right)\\
    &= \frac{1}{n_{s}^{2}}\sum_{h = 1}^{k}{\left| S_{h} \right|^{2}\Var\left( \left( {\widetilde{z}}_{s,h} - \mu_{s} \right)\left( {\widetilde{z}}_{s,h} - \mu_{s} \right)^{T} \right)},
\end{aligned}
\end{equation}
and thus we obtain the same dynamically-weighted kernel k-means problem (\ref{clusteringprob}) as before, with the specific mapping
\(\phi_{c}(z) = (z - \mu_{s}){(z - \mu_{s})}^{T}\), and an associated
kernel
\(\kappa_{c}(z,z') = \left( \left( z - \mu_{s} \right)^{T}(z' - \mu_{s}) \right)^{2}\).

\subsection{Optimisation algorithm} \label{section_alg}

Equation (\ref{clusteringprob}) can be solved to a local minimum by applying a Lloyd's-style
alternating optimisation algorithm \citep{Lloyd1982LeastPCM} along with the kernel trick (Algorithm \ref{alg1}).

\begin{algorithm}
\caption{Dynamically-weighted kernel k-means clustering}\label{alg1}
\begin{algorithmic}[1]
\State
\textbf{Initialisation:} Use kernel k-means++ \citep{Arthur2007K-means++:Seeding} to obtain an
initial set of assignments. Then, alternate between Steps 2 and 3 until
convergence or max iterations reached.
\State
  \textbf{Distance Update:} Compute the distance matrix
  \(D \in \mathbb{R}^{n_{s} \times k}\) from each datapoint to the
  centroid of each cluster using the kernel trick.
\State
  \textbf{Dynamically Weighted Assignment:} Compute the
  one-hot cluster assignment matrix
  \(U \in \left\{ 0,1 \right\}^{n_{s} \times k}\) that assigns each
  point to exactly one of the \(k\) clusters.
\end{algorithmic}
\end{algorithm}

\(U\) is the solution to the quadratic program
\begin{gather}  \label{quadprog}
\begin{split}
    \arg{\min_{U}{\sum_{i,j} \left\lbrack U_{ij}D_{ij}\sum_{i}^{}U_{ij} \right\rbrack}} \quad
    \text{subject to} \ \  0 \leq U_{ij} \leq 1, \
    \sum_{j}^{}U_{ij} = 1.
\end{split}
\end{gather}
Note that we are able to relax the binary constraint
\(U_{ij} \in \left\{ 0,1 \right\}\) as Hoffman's sufficient conditions
for total unimodularity \citep{Heller1956AnDantzigs} are met, meaning the
program will always have integer solutions without having to enforce
them explicitly. Since the Hessian of (\ref{quadprog}) is indefinite in general, this
problem is nonconvex and thus finding the global minimum is NP-hard.
Although the problem as currently defined could be readily input to a
gradient-based interior point method (to find a local minimum), these
have \(O\left( \left( {kn}_{s} \right)^{3} \right)\) complexity, and we
found empirically that this is only practical up to a maximum of a few
hundred data points. Instead, by considering the specific structure of
the problem, we propose in this section a scalable heuristic that can
find a local minimum in \(O\left( kn_{s} \right)\) time.
The idea is to construct \(U\) incrementally row-by-row using a greedy
strategy, weighting the clusters using interim values for the cluster sizes (Algorithm \ref{alg2}).

%

Greedy algorithms are susceptible to local minima and so a single run of
Algorithm \ref{alg2} is likely to give a poor solution. However, the algorithm
can be randomised by permuting the order in which the rows of \(U\) are
filled. Given the speed (each iteration simply finds the minimum value
of a list), a large number of randomised trials can be performed and
then the lowest-cost solution selected. We further found that by
parallelising row assignments, more random trials are possible in a
given time budget, and this improves the best selected solution, even
though the average solution is worse (since the \(n_{j}\) cannot be
updated between parallel assignments).

\begin{figure} [h]
     \centering
     \begin{subfigure}[b]{0.25\textwidth}
         \centering
         \includegraphics[width=\textwidth]{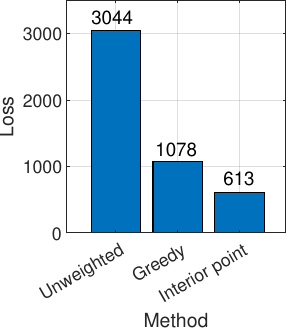}
         \caption{Loss comparison with two alternative approaches.}
         \label{fig:3a}
     \end{subfigure}
     \hfill
     \begin{subfigure}[b]{0.3\textwidth}
         \centering
         \includegraphics[width=\textwidth]{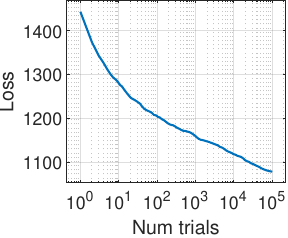}
         \caption{Loss vs number of randomised trials.}
         \label{fig:3b}
     \end{subfigure}
     \hfill
     \begin{subfigure}[b]{0.3\textwidth}
         \centering
         \includegraphics[width=\textwidth]{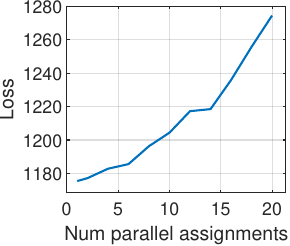}
         \caption{Loss vs number of parallel assignments.}
         \label{fig:3c}
     \end{subfigure}
        \caption{The performance characteristics of Algorithm \ref{alg2}.}
        \label{fig:3}
\end{figure}

The performance characteristics of Algorithm \ref{alg2} are shown in Figure \ref{fig:3}.
The input comprises Euclidian distances between randomly sampled
embeddings of the Humpbacks dataset (described in the subsequent
section). We use a small problem with \(n_{s} = 100\) and \(k = 16\)
which allows us to compare performance with the interior point solution.
Figure \ref{fig:3a} compares the solutions found by Algorithm \ref{alg2} (10 parallel
assignments, \(10^{5}\) random trials), a commercial interior point
solver \citep{TheMathWorksInc.2021MATLAB}, as well as an unweighted assignment
which simply assigns each point to its closest centroid as in the
standard k-means algorithm. Figure \ref{fig:3b} shows the effect of varying the
number of random trials between 1 and \(10^{5}\), with the number of
parallel assignments fixed at 10. Conversely, Figure \ref{fig:3c} shows the effect
of varying the number of parallel assignments between 1 and 20, with the
number of random trials fixed at 100.


\begin{figure}

     \begin{minipage}[b]{0.45\textwidth}
         \centering
        \includegraphics[width=0.9\linewidth]{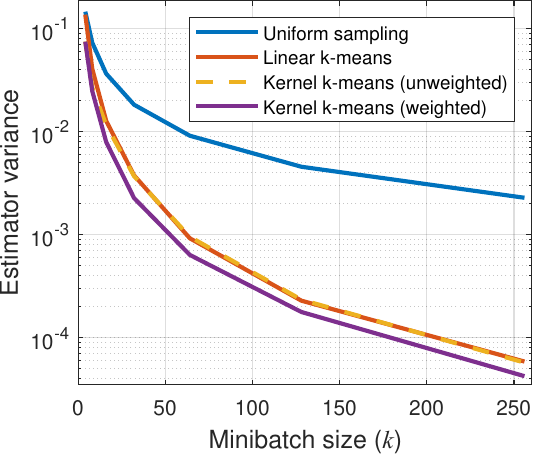}
        \caption{Estimator variance vs minibatch size for Algorithm \ref{alg1} (purple) vs 3 ablations.}
        \label{fig:4}
     \end{minipage}
     \hfill
     \begin{minipage}[b]{0.5\textwidth}
          \centering
       \begin{algorithm}[H]
        \caption{Greedy incremental construction for dynamically weighted assignments}\label{alg2}
        \begin{algorithmic}[1]
        \Require \(D \in \mathbb{R}^{n_{s} \times k}\)
        \Ensure \(U \in \left\{ 0,1 \right\}^{n_{s} \times k},\sum_{j}^{}U_{ij} = 1\)
        \State $U \gets 0_{n_s \times k}$
        \State $n_{1},\ldots,n_{k} \gets 0 \quad \triangleright$ Interim cluster sizes
        \ForAll{\(i \in \left\{ 1,\ldots,n_s \right\}\)}
            \State $j \gets \arg{\min_{j}{D_{ij}\left( n_{j} + 1 \right)}}$
            \State $U_{ij} \gets 1$
            \State $n_j \gets n_j+1$
        \EndFor
        \State \Return $U$
        
        \end{algorithmic}
        \end{algorithm}
        \vspace{4.mm}
\end{minipage}
\end{figure}

An ablation study illustrating the effect of each novel component of our
sampling method is shown in Figure \ref{fig:4}. The plot shows how
\(\Var\left( {\widehat{\mu}}_{\phi,s} \right)\) varies with \(k\) for 4
different sampling strategies: uniform random sampling, stratified
sampling with linear k-means clustering, stratified sampling with
kernel k-means clustering, and finally Algorithm \ref{alg1} which dynamically
weights the clusters. The simulations use 1,000 feature embeddings from
the Humpbacks dataset using a radial basis function kernel with unit
bandwidth.
It is observed that the gap between the uniform and stratified sampling strategies increases with $k$, with around a 50-fold reduction in variance for $k=256$. Conversely, the reduction from using kernel k-means rather than linear k-means is only evident for smaller batch sizes, with the two more or less overlapping for larger $k$.

\subsection{Overall training strategy}

Although the optimal stratification depends on the network weights, it
is not necessary to reconstruct the strata at every training iteration.
This is because, if the network weights at consecutive training
iterations are in a locally smooth region of the parameter space, the
structure of the embeddings will be preserved between iterations \citep{Liu2020AcceleratingStrata}. Based on this, \citet{Liu2020AcceleratingStrata} propose a low-cost
criterion to determine when to re-cluster the data. However, for
simplicity, for the subsequent experiments we re-cluster the data periodically every
fixed \(T\) iterations, where \(T\) is thus a trade-off between the
computational burden and the degree of variance reduction achieved.

\section{Experiments}

In this section, the proposed method is evaluated in realistic training
conditions to assess whether the reduction in variance observed in
Figure \ref{fig:4} translates to an increase in test-domain accuracy. Experiments
are conducted using the DomainBed framework \citep{Gulrajani2021InGeneralization} on the following 4 domain shift datasets.

\textbf{Camelyon17-WILDS} \citep{Bandi2019FromChallenge,Koh2021WILDS:Shifts} tumour
detection in microscopic tissue images across samples from different
hospitals. This benchmark comprises a single training-evaluation split, 5 domains, and 14,000 examples.

\textbf{Humpbacks} \citep{Napoli2023UnsupervisedCalls} detection of humpback whale
vocalisations in underwater acoustic recordings across data from
different acoustic monitoring programs. This benchmark comprises 4 data splits, 4 domains and 8,000 examples.

\textbf{Spawrious} \citep{Lynch2023Spawrious:Biases} classification of 4 dog breeds
across images with different background environments (desert, jungle,
snow etc.). This benchmark comprises 6 data splits of varying difficulty, 6 domains and 18,664 examples.

\textbf{Office-Home} \citep{Venkateswara2017DeepAdaptation} image classification of 65 categories of everyday objects with different image styles (Art, Clipart, Product, and Real World). This benchmark comprises 12 data splits, 4 domains and 15,500 examples.

For Spawrious, Camelyon17, and Office-Home, we use a ResNet-18 \citep{He2015DeepRecognition} backbone pre-trained on ImageNet. For
Humpbacks, we use the audio frontend and architecture described in
\citet{Napoli2023UnsupervisedCalls}.
The domain discrepancies are measured
between the union of all training data and a held-out subset of the evaluation set. For the MMD,
we use a radial basis function (RBF) mixture kernel \citep{Li2018DomainLearning},
given by
$\kappa(z,z') = \sum_{\gamma \in \mathcal{G}}^{}e^{- \gamma\left\| z - z' \right\|^{2}}$, with \(\mathcal{G} = \{ 0.001,0.01,0.1,1,10\}\). With reference to Section \ref{section_alg}, we run Algorithm \ref{alg2} with 100 random trials and 10 parallel assignments. Based on empirical observations from \citet{Liu2020AcceleratingStrata}, we choose
\(T = 100\) for these experiments.

\begin{table}[t] \small
\caption{Average test accuracy for each dataset and training algorithm. ERM is included as an ablation but is not part of the main results comparison.}
\label{accuracy}
\begin{subtable}{1\textwidth}
\caption{Camelyon17}
\centering
\begin{tabular}{@{}l|ccc@{}}
\toprule
\multicolumn{1}{c|}{\textbf{Sampler}}           &\color[HTML]{656565}  \textbf{ERM} & \textbf{CORAL}        & \textbf{MMD}          \\ \midrule
Uniform random     & {\color[HTML]{656565} 91.6 ± 0.7} & 93.1 ± 0.5 & 94.5 ± 0.7 \\
k-means++          & {\color[HTML]{656565} 92.7 ± 0.7} & 94.0 ± 0.3 & 94.5 ± 0.4 \\
DPP                & {\color[HTML]{656565} 90.7 ± 0.4} & 94.0 ± 0.3 & 94.9 ± 0.3 \\
k-means (inputs)   & {\color[HTML]{656565} 91.8 ± 0.9} & 93.2 ± 0.4 & 94.3 ± 0.3 \\
k-means (features) & {\color[HTML]{656565} 91.5 ± 1.3} & 94.5 ± 0.2 & 95.0 ± 0.4 \\
VaRDASS            & {\color[HTML]{656565} 91.8 ± 0.6} & \textbf{94.6 ± 0.3} & \textbf{95.2 ± 0.6} \\ \bottomrule
\end{tabular}
\end{subtable}
\newline
\vfill
\begin{subtable}{1\textwidth}
\caption{Humpbacks}
\centering
\begin{tabular}{@{}l|ccc@{}}
\toprule
\multicolumn{1}{c|}{\textbf{Sampler}} & {\color[HTML]{656565} \textbf{ERM}} & \textbf{CORAL} & \textbf{MMD} \\ \midrule
Uniform random     & {\color[HTML]{656565} 84.2 ± 1.1} & 85.3 ± 1.5          & 87.7 ± 1.3          \\
k-means++          & {\color[HTML]{656565} 84.8 ± 1.1} & 89.1 ± 1.3          & 88.4 ± 1.5          \\
DPP                & {\color[HTML]{656565} 83.9 ± 1.0} & \textbf{89.5 ± 1.4} & 90.3 ± 1.5          \\
k-means (inputs)   & {\color[HTML]{656565} 82.8 ± 1.6} & 85.0 ± 1.1          & 89.0 ± 1.2          \\
k-means (features) & {\color[HTML]{656565} 84.2 ± 1.1} & 87.5 ± 1.2          & 89.3 ± 1.2          \\
VaRDASS            & {\color[HTML]{656565} 83.0 ± 1.5} & 88.3 ± 1.3          & \textbf{92.0 ± 0.5} \\ \bottomrule
\end{tabular}
\end{subtable}
\newline
\vfill

\begin{subtable}{1\textwidth}
\caption{Spawrious}
\centering
\begin{tabular}{@{}l|ccc@{}}
\toprule
\multicolumn{1}{c|}{\textbf{Sampler}}           & \textbf{\color[HTML]{656565} ERM} & \textbf{CORAL}        & \textbf{MMD}          \\ \midrule
Uniform random     & {\color[HTML]{656565} 58.6 ± 0.5} & 62.5 ± 0.6 & 64.4 ± 1.1 \\
k-means++          & {\color[HTML]{656565} 55.7 ± 1.0} & 63.5 ± 1.2 & 67.1 ± 1.7 \\
DPP                & {\color[HTML]{656565} 58.3 ± 0.6} & 65.8 ± 0.9 & 65.4 ± 1.3 \\
k-means (inputs)   & {\color[HTML]{656565} 58.8 ± 0.6} & 57.7 ± 1.3 & 62.6 ± 0.9 \\
k-means (features) & {\color[HTML]{656565} 57.9 ± 0.6} & 65.2 ± 0.9 & 71.4 ± 1.8 \\
VaRDASS            & {\color[HTML]{656565} 58.6 ± 0.7} & \textbf{67.7 ± 0.8} & \textbf{71.6 ± 2.2} \\ \bottomrule
\end{tabular}
\end{subtable}
\newline
\vfill

\begin{subtable}{1\textwidth}
\caption{Office-Home}
\centering
\begin{tabular}{@{}l|ccc@{}}
\toprule
\multicolumn{1}{c|}{\textbf{Sampler}}           & \textbf{\color[HTML]{656565} ERM} & \textbf{CORAL}        & \textbf{MMD}          \\ \midrule
Uniform random     & {\color[HTML]{656565} 43.6 ± 0.3} & 47.3 ± 0.3 & 44.3 ± 0.3 \\
k-means++          & {\color[HTML]{656565} 43.4 ± 0.3} & 44.8 ± 0.4 & 43.1 ± 0.7 \\
DPP                & {\color[HTML]{656565} 44.6 ± 0.4} & 42.1 ± 0.3 & \textbf{45.4 ± 0.3} \\
k-means (inputs)   & {\color[HTML]{656565} 42.2 ± 0.5} & 42.1 ± 0.2 & 44.6 ± 0.3 \\
k-means (features) & {\color[HTML]{656565} 43.8 ± 0.5} & 43.4 ± 0.5 & 44.4 ± 0.2  \\
VaRDASS            & {\color[HTML]{656565} 43.6 ± 0.3} & \textbf{50.7 ± 0.2} & 44.6 ± 0.4 \\ \bottomrule
\end{tabular}
\end{subtable}

\end{table}

Models are trained using the Adam optimiser \citep{Kingma2014Adam:Optimization} for 3,000 iterations.
Hyperparameters, including the learning rate, weight decay, minibatch size $k$, and trade-off parameter $\lambda$, are tuned with a random search of size 10 using an
in-distribution (training domain) validation set, independently for each
sampler. In particular, the random search distributions for $k$ and $\lambda$ (which are the hyperparameters with the largest effect on variance) are $k\sim2^{\text{Uniform}(3,7)}$ and $\lambda\sim10^{\text{Uniform}(-1,1)}$. The entire set of experiments is repeated 5 times for
reproducibility, using different random seeds for hyperparameters,
weight initialisations, and dataset splits. All other hyperparameter
choices and training details follow the DomainBed default options -- see \citet{Gulrajani2021InGeneralization} for full details.
Note that some variance reduction at the gradient level is already achieved for all models through the use of Adam. In addition, shrinkage is also implicitly applied through the tuning of $\lambda$.

In addition to uniform random
sampling, five variance-reduced sampling strategies are compared. This includes two diversity-based samplers \citep{Napoli2024ImprovingSampling}: the k-means++ algorithm \citep{Arthur2007K-means++:Seeding}, and the determinantal point process (DPP) \citep{Zhang2017DeterminantalDiversification}.
Then, three variants of stratified sampling are compared, with different stratification strategies: k-means clustering on the
input data \citep{Zhao2014AcceleratingSampling}; k-means
clustering on \(z_{s},z_{t}\) (an ablation of our method); and VaRDASS, which clusters \(z_{s},z_{t}\) using Algorithm \ref{alg1}. Table \ref{accuracy} shows the average test accuracy and standard errors over the data splits and repeats, for each
dataset. Breakdowns by data split for Humpbacks, Spawrious, and Office-Home are given in Appendix \ref{appA}, Tables \ref{humps split}, \ref{spawrious split}, and \ref{office split}.

It can be seen that clustering the input data \citep{Zhao2014AcceleratingSampling} is not effective in improving test domain accuracy, since the required smoothness conditions do not hold for deeper models. Performing simple k-means clustering
on the features provides a decent improvement (corroborated by Figure \ref{fig:4},
which also shows a good reduction in estimator variance). In addition, both diverse samplers also improve accuracy, but this is inconsistent at times. There does not appear to be a significant difference between k-means++ and the DPP, although the former is computationally faster. However, VaRDASS performs best overall, with the
highest accuracy in six out of the eight cases.

In addition to the targeted UDA algorithms, the tables also report
test-domain accuracy in the case of non-adaptive training by empirical
risk minimisation (ERM). The performance of ERM does not significantly
change by varying the sampler, which indicates that the gains observed
in CORAL and MMD are indeed due to the sampler's effect on the adaptation
component of the loss.

\begin{table}[] \small
\caption{Average test accuracy of VaRDASS compared to additional UDA methods.}
\label{uda baselines}
\centering
\begin{tabular}{@{}l|cccc@{}}
\toprule
\multicolumn{1}{c|}{\textbf{Method}} & \textbf{Camelyon17}   & \textbf{Humpbacks} & \textbf{Spawrious} & \textbf{Office-Home} \\ \midrule
ERM                                  & 91.6 ± 0.7          & 84.2 ± 1.1           & 58.6 ± 0.5          & 43.6 ± 0.3          \\
DANN                                 & 93.2 ± 0.2          & 72.6 ± 3.4           & 67.5 ± 1.3          & 42.0 ± 0.8        \\
CDAN                                 & 91.4 ± 0.5          & 75.4 ± 1.7           & 69.1 ± 1.6          & 42.7 ± 0.5        \\
CDAN + SDAT                          & 90.9 ± 0.8          & 71.8 ± 2.2           & 69.6 ± 1.8          & 43.5 ± 0.6        \\
CDAN + ELS                           & 92.1 ± 0.5          & 75.8 ± 1.3           & 70.6 ± 1.9          & 43.1 ± 0.5        \\
ARM                                  & 93.0 ± 1.1          & 84.2 ± 1.4           & 57.0 ± 0.7          & 44.1 ± 0.3        \\
MCC                                  & 94.2 ± 0.6          & 82.7 ± 2.2           & 59.9 ± 1.2          & 45.6 ± 0.2        \\
CORAL                                & 93.1 ± 0.5          & 85.3 ± 1.5           & 62.5 ± 0.6          & 47.3 ± 0.3          \\
CORAL + VaRDASS                      & 94.6 ± 0.3          & 88.3 ± 1.3           & 67.7 ± 0.8          & \textbf{50.7 ± 0.2} \\
MMD                                  & 94.5 ± 0.7          & 87.7 ± 1.3           & 64.4 ± 1.1          & 44.3 ± 0.3          \\
MMD + VaRDASS                        & \textbf{95.2 ± 0.6} & \textbf{92.0 ± 0.5}  & \textbf{71.6 ± 2.2} & 44.6 ± 0.4          \\ \bottomrule
\end{tabular}
\end{table}

\begin{table}[] \small
\caption{Average wall-clock training times by dataset (seconds).}
\label{times}
\centering
\begin{tabular}{@{}l|cccc@{}}
\toprule
\multicolumn{1}{c|}{\textbf{Sampler}} & \textbf{Camelyon17} & \textbf{Humpbacks} & \textbf{Spawrious} & \textbf{Office-Home} \\ \midrule
Uniform random   & 180 ± 5    & 30 ± 1   & 393 ± 9    & 625 ± 30   \\
k-means++ & 1602 ± 110 & 345 ± 15 & 2312 ± 151 & 1977 ± 65  \\
DPP       & 2654 ± 118 & 731 ± 19 & 4123 ± 109 & 2529 ± 120 \\
k-means (features)   & 670 ± 13   & 145 ± 1  & 1071 ± 6   & 1375 ± 68  \\
VaRDASS   & 2136 ± 25  & 528 ± 9  & 2958 ± 21  & 2163 ± 38  \\ \bottomrule
\end{tabular}
\end{table}

Table \ref{uda baselines} provides a comparison of VaRDASS with the following baseline UDA methods (using uniform random samplers): Domain-Adversarial Neural Network (DANN) \citep{Ganin2015Domain-AdversarialNetworks}, Conditional Domain-Adversarial Network (CDAN) \citep{Long2017ConditionalAdaptation}, CDAN + Smooth Domain Adversarial Training (SDAT) \citep{Rangwani2022ATraining}, CDAN + Environment Label Smoothing (ELS) \citep{Zhang2023FreeSmoothing}, Adaptive Risk Minimisation (ARM) \citep{Zhang2020AdaptiveShift}, and Minimum Class Confusion (MCC) \citep{Jin2020MinimumAdaptation}, alongside ERM, CORAL and MMD. It can be seen that VaRDASS performs the best on all four datasets.

Overall wall-clock training times are reported for each sampler and dataset in Table \ref{times}. These are averaged over all data splits, hyperparameters, and repeats from the previous experiment, including both CORAL and MMD. It can be seen that our implementation of VaRDASS is around an order of magnitude slower than simple random sampling, and 2-3 times slower than vanilla k-means clustering. 

We also investigate whether the discrepancy between the training and
test domain features is lower when the variance reduced samplers are
used (Appendix \ref{appB}). The domain discrepancy is measured between held-out
datapoints which were not seen during training.
These results do not demonstrate a difference in the post-training discrepancy between the standard and variance-reduced samplers. Moreover, all the discrepancy values are already extremely small (two to three orders of magnitude below those obtained under ERM). Thus, the finding is that VaRDASS improves the final target-domain accuracy even though the discrepancy values ultimately converge to similar levels across samplers. This indicates that the primary effect of VaRDASS is to stabilise the optimisation trajectory during training, rather than to obtain a deeper minimum.

\subsection{Effect of $T$}
Figure \ref{fig:var vs it} illustrates how $\Var\left( {\widehat{\mu}}_{\phi,s} \right)$ varies through one training run of the Humpbacks dataset (using $k=32$ and a learning rate of $10^{-4}$). It compares the estimator variance achieved by VaRDASS for different values of $T$, as well with a uniform random minibatch sampler. Clearly, since the features distort more with each model update, this means that the less frequently the cluster allocations are updated, the worse these become over time with respect to the clustering objective (\ref{clusteringprob}), and so the estimator variance increases.
This effect is strongest towards the start of training, since the gradient updates, and thus the feature distortions, are larger. Thus, this implies that adopting an increasing schedule for $T$ could help to reduce unnecessary computation, with negligible detriment to the degree of variance reduction achieved.

On the other hand, the final model test accuracy appears to remain fairly stable across different values of $T$ (Table \ref{acc vs T}). This result indicates that, in terms of final test accuracy, VaRDASS is rather robust to the choice of $T$, and that even a very low update frequency is enough to boost the model performance. 

\begin{figure}[]
    \centering
    \includegraphics[width=0.5\linewidth]{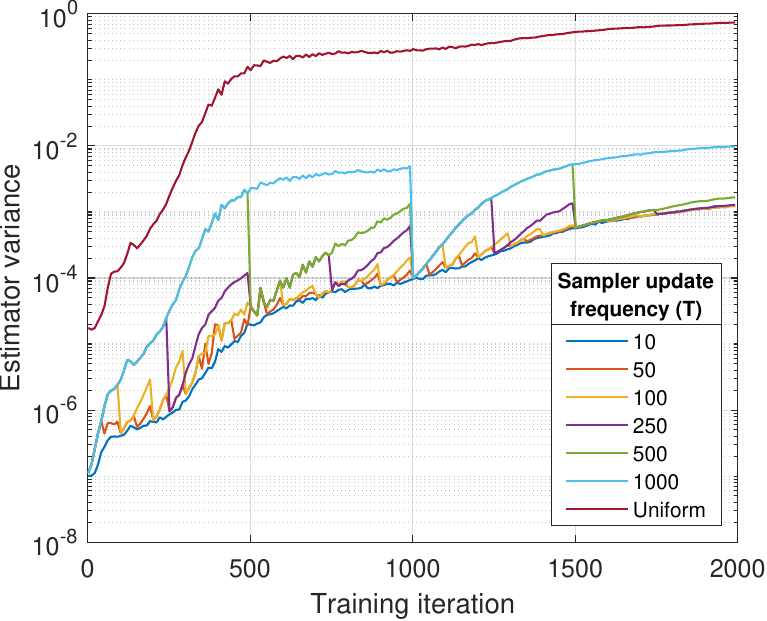}
    \caption{Estimator variance ($\Var\left( {\widehat{\mu}}_{\phi,s} \right)$) throughout training with VaRDASS for different values of $T$, also compared to uniform random sampling.}
    \label{fig:var vs it}
\end{figure}

\begin{table}[]\small
\caption{Average test accuracy of VaRDASS on Humpbacks for different values of $T$.}
\label{acc vs T}
\centering
\begin{tabular}{@{}l|ccccc@{}}
\toprule
\multicolumn{1}{c|}{\textbf{Method}} & \textbf{10} & \textbf{50} & \textbf{100} & \textbf{500} & \textbf{1000} \\ \midrule
CORAL & 90.3 ± 2.6 & 88.2 ± 3.0 & 88.3 ± 1.3   & 88.3 ± 2.3 & 89.8 ± 2.3 \\
MMD   & 92.2 ± 1.9 & 91.7 ± 1.3 & 92.0 ± 0.5   & 92.3 ± 1.7 & 92.4 ± 1.8 \\ \bottomrule
\end{tabular}
\end{table}

\section{Limitations}

Like all kernel methods, Algorithm \ref{alg1} requires
constructing a kernel matrix, which has quadratic cost in the size of the
training set. Although low-rank approximation methods exist which can
improve the scalability to linear \citep{Chitta2014ScalableK-means}, this was not
investigated in this paper, and the method will likely still be
impractical for very large-scale datasets.

k-means clustering is known to be NP-hard and heuristics such as Lloyd's (on which Algorithm \ref{alg1} is based)
only search for local solutions. Thus, even though our derived objective is a good surrogate for minimising variance, there is no
guarantee that an improved solution can actually be found. Similarly,
the runtime of Lloyd's is known to be superpolynomial in the
worst case \citep{Arthur2006HowMethod}, although this algorithm
remains popular due to its simplicity and the fact it performs well in
practice \citep{Arthur2009K-MeansComplexity,CordeirodeAmorim2023OnClusters}.

Another problem which can emerge with kernel k-means clustering,
especially when \(\mathcal{H}\) is high dimensional, is large imbalance
in the strata sizes or even singleton strata. This would slow down the
convergence rate of the training, but can be dealt with by introducing
minimum cluster size constraints to the cluster assignment step.

\section{Conclusion}

This paper introduced VaRDASS, a novel stochastic variance reduction technique for UDA
based on stratified sampling, which specifically targets the MMD and
CORAL losses. A novel clustering algorithm was proposed to solve
this problem and thoroughly analysed. We showed that the method
significantly reduces variance in the targeted losses, and that this results
in consistent improvements in target domain accuracy.
For future work, the method could adopt a strata reconstruction
condition as in \citet{Liu2020AcceleratingStrata}, be combined with complementary
variance reduction techniques, or be extended to additional UDA or
domain generalisation algorithms. Alternative clustering algorithms
with superior performance to Lloyd's could also be investigated.

\subsection{Follow-up work}
For the interested reader, a follow-up method which achieves greater variance reduction by permuting the data sampling order itself can be found in \citet{Napoli2026OrderData}.

\section{Acknowledgements}\label{acknowledgements}

This work was supported by grants from BAE Systems and the Engineering
and Physical Sciences Research Council. The authors acknowledge the use
of the IRIDIS High Performance Computing Facility, and associated
support services at the University of Southampton, in the completion of
this work.

{
    \small
    \bibliographystyle{tmlr}
    \bibliography{references}
}

\appendix 
\newpage
\section{Proof of Theorem \ref{theorem1}} \label{theorem1_proof}
\setcounter{theorem}{0}
\begin{theorem}

Assume \(\mathcal{H}\) has finite dimensionality $d$
and \(\Sigma_{{\widehat{\mu}}_{\phi,s}} = \sigma^{2}I\) for some
positive scalar \(\sigma^{2}\). Then,
\begin{equation}\arg{\min_{\mathcal{S}}{\Var\left( {\widehat{L}}_{\MMD} \right)}} = \arg{\min_{\mathcal{S}}{\Var\left( {\widehat{\mu}}_{\phi,s} \right)}}.\end{equation}
\end{theorem}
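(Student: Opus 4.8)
The plan is to exploit the observation made immediately before the statement: the stratification $\mathcal{S}$ enters the variance formula of the Lemma only through $\Sigma_{{\widehat{\mu}}_{\phi,s}}$. Under the isotropy hypothesis $\Sigma_{{\widehat{\mu}}_{\phi,s}} = \sigma^{2}I$, this entire dependence collapses onto the single nonnegative scalar $\sigma^{2} = \sigma^{2}(\mathcal{S})$. It therefore suffices to show that \emph{both} objectives are strictly increasing functions of $\sigma^{2}$ on its feasible range $\sigma^{2}\ge 0$, since two quantities that are strictly increasing in the same scalar share the same minimiser over $\mathcal{S}$.

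First I would evaluate the surrogate. Because $\Var({\widehat{\mu}}_{\phi,s}) = \Tr(\Sigma_{{\widehat{\mu}}_{\phi,s}})$, isotropy gives $\Var({\widehat{\mu}}_{\phi,s}) = d\sigma^{2}$, strictly increasing in $\sigma^{2}$ as $d\ge 1$. Next I would substitute $\Sigma_{{\widehat{\mu}}_{\phi,s}} = \sigma^{2}I$ into the Lemma's expression and expand using $\Tr\bigl((\sigma^{2}I+B)^{2}\bigr) = d\sigma^{4} + 2\sigma^{2}\Tr(B) + \Tr(B^{2})$ and $m^{T}(\sigma^{2}I+B)m = \sigma^{2}\lVert m\rVert^{2} + m^{T}Bm$, where I write $B=\Sigma_{{\widehat{\mu}}_{\phi,t}}$ (constant in $\mathcal{S}$). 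This yields a quadratic in $\sigma^{2}$:
\begin{equation}
\Var\left( {\widehat{L}}_{\MMD} \right) = 2d\,\sigma^{4} + 4\bigl(\Tr(B) + \lVert m\rVert^{2}\bigr)\sigma^{2} + \bigl(2\Tr(B^{2}) + 4\,m^{T}Bm\bigr).
\end{equation}

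The leading coefficient $2d$ is strictly positive, and the linear coefficient $4(\Tr(B)+\lVert m\rVert^{2})$ is nonnegative, since $B$ is a covariance matrix (hence positive semidefinite, so $\Tr(B)\ge 0$) and $\lVert m\rVert^{2}\ge 0$. I would then conclude strict monotonicity by a direct difference argument rather than differentiation: for $0\le a<b$, the increment factors as $(b-a)\bigl(2d(a+b) + 4(\Tr(B)+\lVert m\rVert^{2})\bigr)>0$. Thus $\Var({\widehat{L}}_{\MMD})$ and $\Var({\widehat{\mu}}_{\phi,s})$ are both strictly increasing in $\sigma^{2}(\mathcal{S})$, and since $\sigma^{2}$ is the only feature of $\mathcal{S}$ on which either depends, the stratification minimising one minimises the other, establishing the claimed equality of argmins.

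The step I would be most careful about is justifying strict monotonicity over the \emph{entire} admissible range, and in particular at the boundary $\sigma^{2}=0$, where the derivative of the quadratic vanishes in the degenerate case $\Tr(B)=\lVert m\rVert^{2}=0$. The factored difference argument sidesteps this cleanly, avoiding any case split on the constant term or appeal to first-order conditions. Everything else is routine bookkeeping with the two trace identities above, so I do not anticipate further obstacles.
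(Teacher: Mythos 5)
Your proposal is correct and follows essentially the same route as the paper's proof: substitute $\Sigma_{{\widehat{\mu}}_{\phi,s}}=\sigma^2 I$ into the Lemma's variance formula, expand to a quadratic in $\sigma^2$ (equivalently in $\Var({\widehat{\mu}}_{\phi,s})=d\sigma^2$), and conclude by monotonicity. Your factored-difference argument for strict monotonicity is a slightly more careful justification of the step the paper simply asserts, but it is not a different method.
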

\begin{proof}
Noting that
\(\Var\left( {\widehat{\mu}}_{\phi,s} \right) = \sigma^{2}d\),
we have
\begin{equation}    
\begin{aligned}
    \Var\left( {\widehat{L}}_{\MMD} \right) ={}& 2\Tr\left( \left( \sigma^{2}I + \Sigma_{{\widehat{\mu}}_{\phi,t}} \right)^{2} \right) + 4m^{T}\left( \sigma^{2}I + \Sigma_{{\widehat{\mu}}_{\phi,t}} \right)m\\
    ={}& 2\Tr\left( \sigma^{4}I + 2\sigma^{2}\Sigma_{{\widehat{\mu}}_{\phi,t}} + \Sigma_{{\widehat{\mu}}_{\phi,t}}^{2} \right) + 4\sigma^{2}m^{T}m + 4m^{T}\Sigma_{{\widehat{\mu}}_{\phi,t}}m\\
    ={}& 2\sigma^{4}d + 4\sigma^{2}\left( m^{T}m + \Tr\left( \Sigma_{{\widehat{\mu}}_{\phi,t}} \right) \right) + 2\Tr\left( \Sigma_{{\widehat{\mu}}_{\phi,t}}^{2} \right) + 4m^{T}\Sigma_{{\widehat{\mu}}_{\phi,t}}m\\
    ={}& \frac{2}{d}\Var\left( {\widehat{\mu}}_{\phi,s} \right)^{2} + \frac{4}{d}\Var\left( {\widehat{\mu}}_{\phi,s} \right)\left( m^{T}m + \Tr\left( \Sigma_{{\widehat{\mu}}_{\phi,t}} \right) \right) \\ &+2\Tr\left( \Sigma_{{\widehat{\mu}}_{\phi,t}}^{2} \right) + 4m^{T}\Sigma_{{\widehat{\mu}}_{\phi,t}}m,
\end{aligned}
\end{equation}
which is monotonically increasing in
\(\Var\left( {\widehat{\mu}}_{\phi,s} \right)\) for all fixed
\(m,\Sigma_{{\widehat{\mu}}_{\phi,t}},d\).
\end{proof}

\section{Proof of Theorem \ref{theorem4}} \label{theorem4_proof}
\setcounter{theorem}{3}

\begin{theorem}
    
Assume
\({\widetilde{z}}_{s,h}\sim\mathcal{ N}\left( \mu_{h},\Sigma_{h} \right),h = 1,\ldots, k\)
are independent random scalars, and let \({\widehat{R}}_{s}\) and
\({\widehat{R}}_{s}'\) be sample covariance matrices of \(z_{s}\) as
defined. Then,
\begin{align}
\Var\left( {\widehat{R}}_{s} \right) &= \gamma^2\left(
\Tr\left( (AC\Sigma)^{2} \right) + 4\mu^{T}AC\Sigma AC\mu\right) \\
\Var\left( {\widehat{R}}_{s}' \right) &= 2\Tr\left( (A\Sigma)^{2} \right) + 4\mu^{T}AC\Sigma AC\mu \\
\Var\left( {\widehat{R}}_{s} \right) &= \gamma^2\left(\Var\left( {\widehat{R}}_{s}' \right) + 2\Tr\left( A^{2}\Sigma \right)^{2} - 4\Tr\left( A^{3}\Sigma^{2} \right)\right)
\end{align}
where \(\mu = \left( \mu_{1},\ldots,\mu_{k} \right)^{T}\),
\(\Sigma = \mathrm{diag}\left( \left( \Sigma_{1},\ldots,\Sigma_{k} \right) \right)\),
\(A = \frac{1}{n_{s}}\mathrm{diag}\left( \left( \left| S_{1} \right|,\ldots,\left| S_{k} \right| \right) \right)\), $\gamma=\frac{n_s}{n_s-1}$ is a bias correction factor,
and \(C = I - JA\) is the weighted centering matrix, with \(J\) the
square matrix of ones.
\end{theorem}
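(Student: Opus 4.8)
The plan is to stack the $k$ stratum samples into a single Gaussian vector and recognise both estimators as quadratic forms in it, so that the standard formula for the variance of a Gaussian quadratic form can be applied. Write $\widetilde z=(\widetilde z_{s,1},\dots,\widetilde z_{s,k})^{\top}\sim\mathcal N(\mu,\Sigma)$, and observe that $A=\frac1{n_s}\mathrm{diag}(|S_1|,\dots,|S_k|)$ is exactly the diagonal matrix of sampling weights, so $\widehat\mu_s=\mathbf 1^{\top}A\widetilde z$. The residual vector is then $\widetilde z-\widehat\mu_s\mathbf 1=(I-\mathbf 1\mathbf 1^{\top}A)\widetilde z=C\widetilde z$, whence $\widehat R_s=\gamma\,\widetilde z^{\top}C^{\top}AC\,\widetilde z$. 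For the known-mean estimator, $\mu_s\mathbf 1=JA\mu$, so $\widehat R_s'=u^{\top}Au$ with $u=\widetilde z-JA\mu\sim\mathcal N(C\mu,\Sigma)$.

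First I would establish the structural identities that make everything collapse. Since the strata partition $\mathcal D_s$, $\mathbf 1^{\top}A\mathbf 1=\frac1{n_s}\sum_h|S_h|=1$, which gives $JAJ=J$ and hence the idempotency $C^2=C$. A short computation using $A^{\top}=A$ and $J^{\top}=J$ shows $C^{\top}A=AC=A-AJA$, which is symmetric. Combining these, $C^{\top}AC=(AC)C=AC^2=AC$, so $\widehat R_s=\gamma\,\widetilde z^{\top}(AC)\widetilde z$ and $\widehat R_s'=u^{\top}Au$ are both quadratic forms with \emph{symmetric} weight matrices.

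Next I would apply the standard identity $\Var(x^{\top}Mx)=2\Tr(M\Sigma M\Sigma)+4\nu^{\top}M\Sigma M\nu$ for $x\sim\mathcal N(\nu,\Sigma)$ and symmetric $M$. Taking $M=AC$, $\nu=\mu$ gives $\Var(\widehat R_s)=\gamma^2\big(2\Tr((AC\Sigma)^2)+4\mu^{\top}AC\Sigma AC\mu\big)$; taking $M=A$, $\nu=C\mu$ and rewriting the mean term via $C^{\top}A=AC$ gives $\Var(\widehat R_s')=2\Tr((A\Sigma)^2)+4\mu^{\top}AC\Sigma AC\mu$. Crucially the two quadratic-in-$\mu$ terms are identical, so the relating identity reduces to the purely trace-level difference $\Var(\widehat R_s)/\gamma^2-\Var(\widehat R_s')=2\Tr((AC\Sigma)^2)-2\Tr((A\Sigma)^2)$.

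The final, and most error-prone, step is to evaluate this trace difference. Expanding $AC=A-AJA$ gives $\Tr((AC\Sigma)^2)=\Tr((A\Sigma)^2)-2\Tr(A\Sigma AJA\Sigma)+\Tr((AJA\Sigma)^2)$, and the rank-one structure $J=\mathbf 1\mathbf 1^{\top}$ together with the diagonality of $A,\Sigma$ collapses the last two terms: the cross term equals $\mathbf 1^{\top}A\Sigma A\Sigma A\mathbf 1=\Tr(A^3\Sigma^2)$, and the quadratic term equals $(\mathbf 1^{\top}A\Sigma A\mathbf 1)^2=(\Tr(A^2\Sigma))^2$. This yields $\Tr((AC\Sigma)^2)=\Tr((A\Sigma)^2)-2\Tr(A^3\Sigma^2)+(\Tr(A^2\Sigma))^2$, and substituting back recovers $\Var(\widehat R_s)=\gamma^2\big(\Var(\widehat R_s')+2(\Tr(A^2\Sigma))^2-4\Tr(A^3\Sigma^2)\big)$. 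The main obstacle is the careful bookkeeping of these trace terms — in particular correctly reducing each rank-one contraction against the diagonal matrices — rather than any conceptual difficulty; once the symmetric form $AC$ and the identity $C^{\top}A=AC$ are in hand, the remainder is mechanical.
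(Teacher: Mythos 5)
Your proof is correct and follows essentially the same route as the paper's: both represent $\widehat{R}_{s}$ and $\widehat{R}_{s}'$ as Gaussian quadratic forms $\gamma\,Z^{\top}ACZ$ and $(Z-\mu_{s}\mathbf{1})^{\top}A(Z-\mu_{s}\mathbf{1})$, invoke the standard variance formula for quadratic forms, use $C\mu=\mu-\mu_{s}\mathbf{1}$ to identify the two mean terms, and expand $\Tr\left((AC\Sigma)^{2}\right)$ via $AC=A-AJA$ to obtain the relating identity. You simply make explicit the bookkeeping the paper leaves implicit (the identities $C^{2}=C$ and $C^{\top}A=AC$, and the rank-one contractions of $J$ against the diagonal matrices), and your retention of the factor $2$ on $\Tr\left((AC\Sigma)^{2}\right)$ agrees with the paper's own appendix expansion, indicating that this factor is merely a typo omitted from the first displayed equation of the theorem statement.
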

\begin{proof}
    
For scalar \({\widetilde{z}}_{s,h}\), the covariances can be written as
quadratic forms
\begin{align}
    \widehat{R}_{s} &= (CZ)^{T}A(CZ) = \gamma Z^{T}ACZ \\
    \widehat{R}_{s}' &= \left( Z - \mu_{s} \right)^{T}A\left( Z - \mu_{s} \right)
\end{align}
with
\(Z = \left( {\widetilde{z}}_{s,1},\ldots,{\widetilde{z}}_{s,k} \right)^{T}\sim\mathcal{N}(\mu,\Sigma)\).
For \(\Var\left( {\widehat{R}}_{s} \right)\), the expression is immediate
from \citet{Seber2007AStatisticians}. For \(\Var\left( {\widehat{R}}_{s}' \right)\), one
has
\begin{equation}
\Var\left( {\widehat{R}}_{s}' \right) = 2\Tr\left( (A\Sigma)^{2} \right) + 4\left( \mu - \mu_{s} \right)^{T}A\Sigma A\left( \mu - \mu_{s} \right)
\end{equation}
and note that \(C\mu = \mu - \mu_{s}\) because
\(\mu_{s} = \frac{1}{n_{s}}\sum_{h = 1}^{k}{\left| S_{h} \right|\mu_{h}}\).
The result follows.

Expanding
\(\Tr\left( (AC\Sigma)^{2} \right) = \Tr\left( \left( A(I - JA)\Sigma \right)^{2} \right)\)
and substituting then gives
\begin{equation}
\begin{aligned}
    \frac{1}{\gamma^2}\Var\left( {\widehat{R}}_{s} \right) &= 2\left( \Tr\left( (A\Sigma)^{2} \right) + \Tr\left( A^{2}\Sigma \right)^{2} - 2\Tr\left( A^{3}\Sigma^{2} \right) \right) + 4\mu^{T}AC\Sigma AC\mu \\
    &= \Var\left( {\widehat{R}}_{s}' \right) + 2\Tr\left( A^{2}\Sigma \right)^{2} - 4\Tr\left( A^{3}\Sigma^{2} \right). 
\end{aligned}\qedhere
\end{equation}
\end{proof}
\newpage

\section{Performance breakdown by data split} \label{appA}

\begin{table}[!ht] \scriptsize
\centering
\caption{Average test accuracy for Humpbacks by data split.}
\label{humps split}
\begin{tabular}{@{}l|cccc|c@{}}
\toprule
\textbf{Method}      & \textbf{Domain 1}   & \textbf{Domain 2}   & \textbf{Domain 3}   & \textbf{Domain 4}   & \textbf{Average}    \\ \midrule
ERM                  & 70.3 ± 2.7          & 92.0 ± 1.9          & 78.1 ± 3.0          & 96.2 ± 0.6          & 84.2 ± 1.1          \\
DANN                 & 60.5 ± 4.2          & 90.1 ± 2.1          & 63.9 ± 6.1          & 76.1 ± 11.1         & 72.6 ± 3.4          \\
CDAN                 & 61.6 ± 4.2          & 82.2 ± 4.7          & 73.5 ± 3.0          & 84.4 ± 0.6          & 75.4 ± 1.7          \\
CDAN + SDAT          & 63.7 ± 3.0          & 81.2 ± 6.7          & 63.6 ± 3.7          & 78.9 ± 3.2          & 71.8 ± 2.2          \\
CDAN + ELS           & 62.7 ± 2.4          & 85.6 ± 3.4          & 70.8 ± 2.2          & 83.9 ± 2.3          & 75.8 ± 1.3          \\
ARM                  & 78.8 ± 3.4          & 95.9 ± 1.0          & 72.4 ± 3.6          & 89.6 ± 2.0          & 84.2 ± 1.4          \\
MCC                  & 70.0 ± 5.9          & 87.7 ± 4.9          & 76.1 ± 4.1          & 97.1 ± 0.5          & 82.7 ± 2.2          \\ \midrule
CORAL                & 77.2 ± 1.4          & 87.7 ± 3.4          & 83.6 ± 4.2          & 92.7 ± 1.9          & 85.3 ± 1.5          \\
+ k-means++          & 79.1 ± 1.7          & \textbf{97.8 ± 0.6} & 85.4 ± 4.6          & 93.8 ± 1.0          & 89.1 ± 1.3          \\
+ DPP                & 81.0 ± 1.9          & 97.4 ± 1.0          & 84.4 ± 5.1          & \textbf{94.9 ± 1.2} & \textbf{89.5 ± 1.4} \\
+ k-means (inputs)   & 72.0 ± 2.4          & 93.2 ± 1.5          & 83.0 ± 3.4          & 91.6 ± 0.9          & 85.0 ± 1.1          \\
+ k-means (features) & \textbf{81.4 ± 0.6} & 89.4 ± 2.3          & \textbf{86.1 ± 4.0} & 93.1 ± 1.2          & 87.5 ± 1.2          \\
+ VaRDASS            & 78.4 ± 2.8          & 95.1 ± 1.6          & 85.3 ± 4.2          & 94.4 ± 1.2          & 88.3 ± 1.3          \\ \midrule
MMD                  & 78.3 ± 2.5          & 95.0 ± 1.2          & 83.3 ± 4.2          & 94.3 ± 1.1          & 87.7 ± 1.3          \\
+ k-means++          & 79.7 ± 1.9          & 97.6 ± 0.4          & 79.5 ± 5.7          & \textbf{96.7 ± 0.3} & 88.4 ± 1.5          \\
+ DPP                & 83.4 ± 1.0          & 97.9 ± 0.6          & 83.4 ± 5.9          & 96.6 ± 0.6          & 90.3 ± 1.5          \\
+ k-means (inputs)   & 76.6 ± 2.5          & 95.9 ± 1.4          & 88.1 ± 3.7          & 95.5 ± 0.9          & 89.0 ± 1.2          \\
+ k-means (features) & \textbf{80.4 ± 1.9} & 97.1 ± 0.6          & 85.7 ± 4.2          & 93.9 ± 0.6          & 89.3 ± 1.2          \\
+ VaRDASS            & 79.2 ± 1.1          & \textbf{98.4 ± 0.5} & \textbf{95.5 ± 1.2} & 95.0 ± 1.0          & \textbf{92.0 ± 0.5} \\ \bottomrule
\end{tabular}
\end{table}

\begin{table}[!ht] \scriptsize
\centering
\caption{Average test accuracy for Spawrious by data split.}
\label{spawrious split}
\begin{tabular}{@{}l|cccccc|c@{}}
\toprule
\textbf{Method}      & \textbf{O2O-Easy}   & \textbf{O2O-Medium} & \textbf{O2O-Hard}   & \textbf{M2M-Easy}   & \textbf{M2M-Medium}  & \textbf{M2M-Hard}   & \textbf{Average}    \\ \midrule
ERM                  & 68.6 ± 1.7          & 62.6 ± 0.8          & 62.1 ± 0.7          & 70.2 ± 1.8          & 45.0 ± 1.3           & 43.0 ± 1.2          & 58.6 ± 0.5          \\
DANN                 & 91.4 ± 3.0          & 57.1 ± 3.5          & 71.1 ± 3.2          & 91.1 ± 0.1          & 54.8 ± 4.4           & 39.8 ± 3.4          & 67.5 ± 1.3          \\
CDAN                 & 91.9 ± 1.7          & 57.0 ± 3.0          & 70.3 ± 2.2          & 92.9 ± 0.9          & 58.3 ± 3.8           & 44.3 ± 7.7          & 69.1 ± 1.6          \\
CDAN + SDAT          & 92.9 ± 1.4          & 54.3 ± 3.5          & 73.7 ± 6.6          & 83.3 ± 3.0          & 60.3 ± 4.3           & 53.0 ± 6.0          & 69.6 ± 1.8          \\
CDAN + ELS           & 89.8 ± 1.9          & 58.4 ± 2.1          & 67.3 ± 1.5          & 89.9 ± 2.2          & 62.3 ± 2.3           & 56.1 ± 10.2         & 70.6 ± 1.9          \\
ARM                  & 70.2 ± 2.9          & 58.6 ± 2.1          & 60.6 ± 0.2          & 68.7 ± 1.6          & 42.2 ± 1.8           & 41.7 ± 1.0          & 57.0 ± 0.7          \\
MCC                  & 87.6 ± 1.9          & 51.0 ± 0.8          & 54.0 ± 6.3          & 77.5 ± 2.4          & 46.4 ± 0.5           & 42.7 ± 1.2          & 59.9 ± 1.2          \\ \midrule
CORAL                & 70.7 ± 2.3          & 58.4 ± 1.9          & 64.1 ± 0.6          & 78.6 ± 1.5          & 54.1 ± 1.2           & 49.2 ± 0.7          & 62.5 ± 0.6          \\
+ k-means++          & 82.8 ± 3.5          & 58.2 ± 2.4          & 61.4 ± 4.1          & 75.5 ± 2.7          & 54.7 ± 2.7           & 48.6 ± 1.1          & 63.5 ± 1.2          \\
+ DPP                & 79.8 ± 3.4          & 59.8 ± 2.3          & 67.8 ± 2.0          & 79.6 ± 2.3          & 58.5 ± 1.4           & \textbf{49.4 ± 1.9} & 65.8 ± 0.9          \\
+ k-means (inputs)   & 80.4 ± 5.0          & 59.6 ± 1.5          & 50.8 ± 1.5          & 79.5 ± 0.9          & 49.5 ± 2.1           & 26.3 ± 4.8          & 57.7 ± 1.3          \\
+ k-means (features) & 85.8 ± 2.2          & 58.7 ± 1.9          & 60.4 ± 2.4          & \textbf{81.8 ± 1.9} & \textbf{55.2 ± 2.3}  & 49.2 ± 2.2          & 65.2 ± 0.9          \\
+ VaRDASS            & \textbf{90.1 ± 2.2} & \textbf{62.2 ± 1.0} & \textbf{79.6 ± 2.8} & 77.0 ± 2.7          & 51.7 ± 1.6           & 45.8 ± 0.4          & \textbf{67.7 ± 0.8} \\ \midrule
MMD                  & 79.2 ± 3.3          & 61.9 ± 1.2          & 65.5 ± 3.4          & 76.2 ± 3.4          & 55.3 ± 3.4           & 48.1 ± 0.7          & 64.4 ± 1.1          \\
+ k-means++          & 83.7 ± 6.3          & 58.6 ± 2.4          & 68.4 ± 4.5          & 79.3 ± 2.7          & 60.0 ± 3.0           & 52.5 ± 4.5          & 67.1 ± 1.7          \\
+ DPP                & 83.6 ± 4.5          & \textbf{62.9 ± 0.9} & 63.5 ± 3.2          & 79.1 ± 3.1          & 57.4 ± 4.0           & 45.6 ± 1.7          & 65.4 ± 1.3          \\
+ k-means (inputs)   & 85.9 ± 2.9          & 58.2 ± 2.2          & 58.8 ± 2.0          & 76.6 ± 2.3          & 50.8 ± 1.5           & 45.5 ± 1.4          & 62.6 ± 0.9          \\
+ k-means (features) & 82.1 ± 5.0          & 60.2 ± 2.2          & \textbf{78.4 ± 3.6}          & \textbf{80.0 ± 3.3} & 66.8 ± 4.3           & \textbf{60.9 ± 6.3} & 71.4 ± 1.8          \\
+ VaRDASS            & \textbf{94.2 ± 1.8} & 61.5 ± 1.4          & 72.7 ± 4.5 & 76.9 ± 4.3          & \textbf{75.9 ± 10.6} & 48.1 ± 5.1          & \textbf{71.6 ± 2.2} \\ \bottomrule
\end{tabular}
\end{table}

\newpage

\begin{table}[t] \scriptsize
\caption{Average test accuracy for Office-Home by data split.}
\label{office split}
\centering
\begin{tabular}{@{}l|cccccccccccc|c@{}}
\toprule
\textbf{Method}      & \textbf{C-A}  & \textbf{A-C}  & \textbf{P-A}  & \textbf{A-P}  & \textbf{R-A}  & \textbf{A-R}  & \textbf{P-C}  & \textbf{C-P}  & \textbf{R-C}  & \textbf{C-R}  & \textbf{R-P}  & \textbf{P-R}  & \textbf{Average} \\ \midrule
ERM                  & 32.3          & 30.8          & 29.3          & 40.1          & 47.4          & 53.8          & 32.3          & 45.9          & 35.5          & 49.1          & 66.9          & 59.7          & 43.6 ± 0.2             \\
DANN                 & 30.3          & 33.0          & 27.3          & 38.6          & 47.0          & 54.5          & 32.9          & 42.9          & 38.4          & 42.4          & 62.5          & 53.8          & 42.0 ± 0.8             \\
CDAN                 & 33.0          & 32.8          & 29.0          & 40.9          & 47.2          & 53.8          & 28.8          & 43.0          & 40.8          & 47.6          & 63.2          & 52.2          & 42.7 ± 0.5             \\
CDAN + SDAT          & 30.3          & 32.7          & 26.6          & 40.2          & 50.8          & 55.9          & 31.5          & 43.6          & 39.1          & 46.8          & 67.0          & 56.9          & 43.5 ± 0.6             \\
CDAN + ELS           & 28.2          & 33.2          & 28.4          & 41.0          & 48.7          & 55.3          & 30.7          & 45.5          & 40.7          & 48.3          & 64.1          & 52.6          & 43.1 ± 0.5             \\
ARM                  & 32.9          & 28.4          & 31.6          & 41.5          & 46.2          & 57.3          & 30.1          & 48.6          & 37.7          & 49.9          & 67.8          & 57.7          & 44.1 ± 0.3             \\
MCC                  & 30.4          & 33.0          & 31.4          & 45.8          & 46.3          & 58.4          & 35.5          & 51.1          & 41.1          & 50.7          & 66.2          & 57.9          & 45.6 ± 0.2             \\ \midrule
CORAL                & 38.6          & 33.1          & 34.8          & 37.9          & 55.9          & 53.6          & 39.6          & 47.7          & 47.3          & 50.7          & 69.8          & 59.1          & 47.3 ± 0.3             \\
+ k-means++          & 35.6          & 31.1          & 29.5          & 35.9          & 55.7          & 53.0          & 35.4          & 45.7          & 41.9          & 46.6          & 67.1          & 56.0          & 44.8 ± 0.4             \\
+ DPP                & 34.2          & 27.5          & 30.9          & 33.5          & 45.9          & 52.4          & 26.6          & 42.3          & 38.0          & 41.6          & 66.8          & 56.0          & 42.1 ± 0.3             \\
+ k-means (inputs)   & 32.0          & 23.7          & 33.7          & 33.0          & 54.9          & 54.0          & 27.5          & 44.5          & 31.7          & 44.4          & 66.2          & 59.0          & 42.1 ± 0.2             \\
+ k-means (features) & 25.8          & 30.5          & 30.2          & 43.1          & 55.9          & 56.5          & 32.7          & 45.9          & 37.7          & 39.6          & 69.0          & 53.4          & 43.4 ± 0.5             \\
+ VaRDASS            & \textbf{40.1} & \textbf{35.5} & \textbf{35.2} & \textbf{43.8} & \textbf{57.4} & \textbf{57.6} & \textbf{43.2} & \textbf{52.5} & \textbf{49.7} & \textbf{55.8} & \textbf{73.6} & \textbf{63.8} & \textbf{50.7 ± 0.2}    \\ \midrule
MMD                  & 32.9          & 33.6          & 29.1          & 42.9          & 48.0          & 54.1          & 32.1          & 47.7          & 37.3          & 51.0          & 64.6          & 58.3          & 44.3 ± 0.3             \\
+ k-means++          & 32.6          & 31.5          & 27.8          & 43.8          & 45.6          & 54.3          & \textbf{34.5} & 45.8          & 37.2          & 49.5          & 62.9          & 57.0          & 43.1 ± 0.7             \\
+ DPP                & \textbf{33.1} & 35.4          & \textbf{32.4} & 43.5          & \textbf{50.0} & \textbf{58.2} & 31.8          & \textbf{48.0} & 37.3          & 50.1          & 66.7          & \textbf{58.5} & \textbf{45.4 ± 0.3}    \\
+ k-means (inputs)   & 32.9          & 34.9          & 30.7          & 43.6          & 46.6          & 45.4          & 31.7          & 48.1          & 37.8          & \textbf{51.4} & 66.6          & 54.4          & 44.6 ± 0.3             \\
+ k-means (features) & 32.8          & \textbf{35.6} & 30.7          & 45.5          & 46.0          & 55.6          & 32.9          & 45.8          & 36.7          & 47.8          & 64.8          & 58.3          & 44.4 ± 0.2             \\
+ VaRDASS            & 31.6          & 33.4          & 31.3          & \textbf{45.9} & 48.5          & 53.0          & 30.8          & 47.3          & \textbf{38.1} & 49.8          & \textbf{66.8} & 58.3          & 44.6 ± 0.4             \\ \bottomrule
\end{tabular}
\end{table}

\section{Domain discrepancy post-training} \label{appB}
\begin{table}[!ht]
\caption{Average discrepancy between the training and test domains for each dataset.}
\begin{subtable}{1\textwidth}
\caption{Camelyon17}
\centering
\begin{tabular}{@{}l|ll@{}}
\toprule
\multicolumn{1}{c|}{\textbf{Sampler}}           & \multicolumn{1}{c}{\textbf{CORAL}} & \multicolumn{1}{c}{\textbf{MMD}} \\ \midrule
Uniform random                 & 0.151 ± 0.074                      & \textbf{0.135 ± 0.021}           \\
k-means (inputs)   & \textbf{0.093 ± 0.011}             & 0.183 ± 0.030                    \\
k-means (features) & 0.117 ± 0.084                      & 0.260 ± 0.038                    \\
VaRDASS                        & 0.114 ± 0.023                      & 0.217 ± 0.042                    \\ \bottomrule
\end{tabular}
\end{subtable}
\newline
\vfill
\begin{subtable}{1\textwidth}
\caption{Humpbacks}
\centering
\begin{tabular}{@{}l|ll@{}}
\toprule
\multicolumn{1}{c|}{\textbf{Sampler}}           & \multicolumn{1}{c}{\textbf{CORAL}} & \multicolumn{1}{c}{\textbf{MMD}} \\ \midrule
Uniform random                 & 5.712 ± 1.203                      & 6.489 ± 0.289                    \\
k-means (inputs)   & \textbf{4.551 ± 1.049}             & 6.936 ± 0.380                    \\
k-means (features) & 8.984 ± 1.885                      & 6.396 ± 0.326                    \\
VaRDASS                        & 8.695 ± 1.533                      & \textbf{6.008 ± 0.321}           \\ \bottomrule
\end{tabular}
\end{subtable}
\newline
\vfill
\begin{subtable}{1\textwidth}
\caption{Spawrious}
\centering
\begin{tabular}{@{}l|ll@{}}
\toprule
\multicolumn{1}{c|}{\textbf{Sampler}}           & \multicolumn{1}{c}{\textbf{CORAL}} & \multicolumn{1}{c}{\textbf{MMD}} \\ \midrule
Uniform random                 & 0.281 ± 0.049                      & \textbf{0.206 ± 0.038}           \\
k-means (inputs)   & \textbf{0.201 ± 0.043}             & 0.229 ± 0.041                    \\
k-means (features) & 0.270 ± 0.061                      & 0.255 ± 0.041                    \\
VaRDASS                        & 0.220 ± 0.034                      & 0.317 ± 0.051                    \\ \bottomrule
\end{tabular}
\end{subtable}
\end{table}

\end{document}